\acrodef{BAT}{basic action theory}
\acrodef{fd-BAT}{finite-domain basic action theory}
\acrodef{MTL}{Metric Temporal Logic}
\acrodef{STS}{symbolic transition system}
\acrodef{TA}{timed automaton}
\acrodef{MITL}{Metric Interval Temporal Logic}
\acrodef{LTL}{Linear Time Logic}
\newtheorem{definition}{Definition}
\newtheorem{lemma}{Lemma}
\newtheorem{theorem}{Theorem}
\newtheorem{proposition}{Proposition}
\definecolor{rwth-blue}{cmyk}{1,.5,0,0}\colorlet{rwth-lblue}{rwth-blue!50}\colorlet{rwth-llblue}{rwth-blue!25}
\definecolor{rwth-violet}{cmyk}{.6,.6,0,0}\colorlet{rwth-lviolet}{rwth-violet!50}\colorlet{rwth-llviolet}{rwth-violet!25}
\definecolor{rwth-purple}{cmyk}{.7,1,.35,.15}\colorlet{rwth-lpurple}{rwth-purple!50}\colorlet{rwth-llpurple}{rwth-purple!25}
\definecolor{rwth-carmine}{cmyk}{.25,1,.7,.2}\colorlet{rwth-lcarmine}{rwth-carmine!50}\colorlet{rwth-llcarmine}{rwth-carmine!25}
\definecolor{rwth-red}{cmyk}{.15,1,1,0}\colorlet{rwth-lred}{rwth-red!50}\colorlet{rwth-llred}{rwth-red!25}
\definecolor{rwth-magenta}{cmyk}{0,1,.25,0}\colorlet{rwth-lmagenta}{rwth-magenta!50}\colorlet{rwth-llmagenta}{rwth-magenta!25}
\definecolor{rwth-orange}{cmyk}{0,.4,1,0}\colorlet{rwth-lorange}{rwth-orange!50}\colorlet{rwth-llorange}{rwth-orange!25}
\definecolor{rwth-yellow}{cmyk}{0,0,1,0}\colorlet{rwth-lyellow}{rwth-yellow!50}\colorlet{rwth-llyellow}{rwth-yellow!25}
\definecolor{rwth-grass}{cmyk}{.35,0,1,0}\colorlet{rwth-lgrass}{rwth-grass!50}\colorlet{rwth-llgrass}{rwth-grass!25}
\definecolor{rwth-green}{cmyk}{.7,0,1,0}\colorlet{rwth-lgreen}{rwth-green!50}\colorlet{rwth-llgreen}{rwth-green!25}
\definecolor{rwth-cyan}{cmyk}{1,0,.4,0}\colorlet{rwth-lcyan}{rwth-cyan!50}\colorlet{rwth-llcyan}{rwth-cyan!25}
\definecolor{rwth-teal}{cmyk}{1,.3,.5,.3}\colorlet{rwth-lteal}{rwth-teal!50}\colorlet{rwth-llteal}{rwth-teal!25}
\definecolor{rwth-silver}{cmyk}{.39,.31,.32,.14}
\definecolor{rwth-gold}{cmyk}{.35,.46,.7,.35}
\tikzset{
  >=stealth',
  every node/.style={
    rectangle,
    rounded corners,
    draw=black,
    thick,
    text centered
  },
  edgelabel/.style={
    font=\scriptsize,
    draw=none
  },
  action/.style={
    thick,
    draw=black,
    text=black
  },
  sysevent/.style={
    thick,
    draw=blue,
    dotted,
    text=blue
  },
  exoevent/.style={
    thick,
    draw=black,
    text=black
  }
}
\tikzset{stn/.style={rectangle,draw,font=\small}}
\title{
	Controller Synthesis for Golog Programs over Finite Domains \\ with Metric Temporal Constraints
}
\author{
  Till Hofmann \and Gerhard Lakemeyer \\
  \affiliations
  Knowledge-Based Systems Group, RWTH Aachen University \\
  \emails
  \{hofmann, gerhard\}@kbsg.rwth-aachen.de
}
\newcommand*{\textcite}[1]{\citeauthor{#1}~(\citeyear{#1})}
\newcommand*{\citet}[1]{\citeauthor{#1} \shortcite{#1}}
\newcommand*{\golog}{\textsc{Golog}}
\newcommand*{\congolog}{\textsc{ConGolog}}
\newcommand*{\indigolog}{\textsc{IndiGolog}}
\newcommand*{\es}{\texorpdfstring{\ensuremath{\mathcal{E \negthinspace S}}}{ES}}
\newcommand*{\esg}{\texorpdfstring{\ensuremath{\mathcal{E \negthinspace S \negthinspace G}}}{ESG}}
\newcommand*{\tesg}{\texorpdfstring{\ensuremath{\operatorname{\mathit{t-}}\negthinspace\mathcal{E \negthinspace S \negthinspace G}}}{t-ESG}}
\newcommand*{\mtl}{MTL}
\newcommand*{\eqdef}{\ensuremath\overset{def}{=}}
\newcommand*{\equivspace}{\ensuremath\,\equiv\;}
\newcommand*{\until}[1]{\ensuremath{\,\mathbf{U}_{#1}}\,}
\newcommand*{\tnext}[1]{\ensuremath{\mathbf{X}_{#1}}\if#1{\,}\fi}
\newcommand*{\tprev}[1]{\ensuremath{\mathbf{V}_{#1}}\if#1{\,}\fi}
\newcommand*{\fut}[1]{\ensuremath{\mathbf{F}_{#1}}\if#1{\,}\fi}
\newcommand*{\past}[1]{\ensuremath{\mathbf{P}_{#1}}\if#1{\,}\fi}
\newcommand*{\glob}[1]{\ensuremath{\mathbf{G}_{#1}}\if#1{\,}\fi}
\newcommand*{\hist}[1]{\ensuremath{\mathbf{H}_{#1}}\if#1{\,}\fi}
\newcommand*{\mi}[1]{\ensuremath{\mathit{#1}}}
\newcommand*{\la}{\langle}
\newcommand*{\ra}{\rangle}
\newcommand*{\final}{\ensuremath{\mathcal{F}^w}}
\mathchardef\mhyphen="2D 
\newcommand*{\timedom}{\mathbb{Q}}
\newif\ifhideproofs
\begin{document}

\maketitle

\begin{abstract}
Executing a Golog program on an actual robot typically requires
additional steps to account for hardware or software details of the
robot platform, which can be formulated as constraints on the program.
Such constraints are often temporal, refer to metric time, and require
modifications to the abstract Golog program.  We describe how to
formulate such constraints based on a modal variant of the Situation
Calculus. These constraints connect the abstract program with the
platform models, which we describe using timed automata.  We show that
for programs over finite domains and with fully known initial state, the
problem of synthesizing a controller that satisfies the constraints
while preserving the effects of the original program can be reduced to
MTL synthesis.  We do this by constructing a timed automaton from the
abstract program and synthesizing an MTL controller from this automaton,
the platform models, and the constraints. We prove that the synthesized
controller results in execution traces which are the same as those of
the original program, possibly interleaved with platform-dependent
actions, that they satisfy all constraints, and that they have the same
effects as the traces of the original program. By doing so, we obtain a
decidable procedure to synthesize a controller that satisfies the
specification while preserving the original program.
\end{abstract}

\section{Introduction}
While \golog{} \cite{levesqueGOLOGLogicProgramming1997}, an agent programming language based on the Situation Calculus
\cite{mccarthySituationsActionsCausal1963,reiterKnowledgeActionLogical2001}, allows a clear and abstract specification
of an agent's behavior, executing a \golog{} program on a real robot often creates additional issues.  Typically, the
robot's platform requires additional constraints that are ignored when designing a \golog{} program.
As an example, a robot may need to calibrate its arm before it can use it.
One way to deal with such platform constraints is to split the reasoning into two
parts~\cite{hofmannConstraintbasedOnlineTransformation2018}: First, an abstract \golog{} program specifies the intended
behavior of the robot, without taking the robot platform into account. In a second step, the platform is considered by
transforming the abstract program into a program that is executable on the particular platform, given a model
of the platform and temporal constraints that connect the platform with the plan.

In this paper, we propose a method for such a transformation: We model the robot platform with a \acf{TA} and formulate
constraints with \tesg{}~\cite{hofmannLogicSpecifyingMetric2018}, a modal variant of the Situation Calculus extended
with temporal operators and metric time.  We then synthesize a controller that executes the abstract program, but also
inserts additional platform actions to satisfy the platform constraints. To do so, we restrict the \golog{} program to a
finite domain, finite traces, and a fully known initial state. This allows us to reduce the controller synthesis problem
to the \mtl{} control problem, which has been shown to be decidable \cite{bouyerControllerSynthesisMTL2006}.
Furthermore, for the purpose of this paper, we only use time to formulate temporal constraints on the robot platform and
we restrict programs to untimed programs, i.e., in contrast to programs in \cite{hofmannLogicSpecifyingMetric2018}, a
program may not refer to time and action preconditions and effects are time-independent. We will revisit these restrictions
in the concluding section.

In the following, we first give an overview on  the Situation Calculus and \golog{} and related work in
\autoref{sec:foundations} and summarize \tesg{} in \autoref{sec:timed-esg}.
In \autoref{sec:mtl-synthesis}, we describe timed automata and \acf{MTL}, before we summarize the
\ac{MTL} synthesis problem. We explain how to transform a \golog{} program over a finite domain
with a complete initial state into a \ac{TA} in \autoref{sec:pta} and how to model a robot platform with a \ac{TA} and
temporal constraints in \autoref{sec:platform-models}. Both \ac{TA} and the constraints are then used in
\autoref{sec:synthesis} to synthesize a controller that executes the program while satisfying all constraints.
We conclude in \autoref{sec:conclusion}.

\section{Related Work}\label{sec:foundations}
The Situation Calculus \cite{mccarthySituationsActionsCausal1963,reiterKnowledgeActionLogical2001} is a first-order
logic for representing and reasoning about actions.  Following \citeauthor{reiterKnowledgeActionLogical2001}, action
preconditions and effects as well as information about the initial situation are then encoded as so-called \emph{Basic
Action Theories (BATs)}.  The action programming language \golog{} \cite{levesqueGOLOGLogicProgramming1997} and its
concurrent variant \congolog{} \cite{degiacomoConGologConcurrentProgramming2000} are based on
the Situation Calculus and offer imperative programming constructs such as sequences of actions and iteration as well
as non-deterministic branching and non-deterministic choice.  The semantics of \golog{} and its on-line variant
\indigolog{} can be specified in terms of transitions \cite{degiacomoIndiGologHighlevelProgramming2009}.
The logic \es{} \cite{lakemeyerSemanticCharacterizationUseful2011} is a modal variant of the Situation Calculus which gets rid of
explicit situation terms and uses modal operators instead.
The logic \esg{} \cite{classenLogicNonTerminatingGolog2008,classenPlanningVerificationAgent2013} is a temporal extension
of \es{} and used for the verification of \golog{} programs.  It specifies program transition semantics similar to the
transition semantics of \indigolog{} and extends \es{} with the temporal operators \tnext{} (\emph{next}) and \until{}
(\emph{until}). The logic \tesg{}~\cite{hofmannLogicSpecifyingMetric2018} extends \esg{} with metric time and timing
constraints on the \emph{until} operator.


\ac{MTL} \cite{koymansSpecifyingRealtimeProperties1990} is an extension of \ac{LTL} with metric time, which allows
expressions such as $\fut{\leq c}$, meaning \emph{eventually within time $c$}.  In \ac{MTL}, formulas are interpreted
over \emph{timed words} or \emph{timed state sequences}, where each state specifies which propositions are true, and
each state has an associated time value.  Depending on the choice of the state and time theory, the satisfiability
problem for \ac{MTL} becomes undecidable \cite{alurRealTimeLogicsComplexity1993}.  However, both for finite words and
for a pointwise semantics, it has been shown to be decidable
\cite{ouaknineDecidabilityMetricTemporal2005,ouaknineRecentResultsMetric2008}.

Similar to the proposed approach, \textcite{schifferSelfMaintenanceAutonomousRobots2010} extend \golog{} for
self-maintenance by allowing temporal constraints using Allen's Interval Algebra
\cite{allenMaintainingKnowledgeTemporal1983}.  Those constraints are resolved on-line by interleaving the original
program with maintenance actions.  Closely related is also the work by \textcite{finziRepresentingFlexibleTemporal2005},
who propose a hybrid approach of temporal constraint reasoning and reasoning about actions based on the Situation
Calculus.  They also allow constraints based on Allen's Interval Algebra, which are translated into a temporal
constraint network. \citeauthor{degiacomoSynthesisLTLLDL2015} describe a synthesis method for \ac{LTL} and LDL
specifications over finite traces~\cite{degiacomoSynthesisLTLLDL2015}. Similar to \ac{MTL} synthesis, they partition the
propositions in controllable and uncontrollable symbols and use games to synthesize a controller.
Based on $\mi{LTL}_f$ synthesis, \citeauthor{heReactiveSynthesisFinite2017} describe a synthesis method that controls a
robot against uncontrollable environment actions under resource constraints~\cite{heReactiveSynthesisFinite2017}. They
model the underlying planning problem as a graph, where each vertex describes the state of the world and each edge
corresponds to an action, either by the agent or by the environment. In contrast to this work, they do not allow metric
temporal constraints.

\section{Timed \esg{}}\label{sec:timed-esg}

In this section, we summarize the syntax and semantics of \tesg{} \cite{hofmannLogicSpecifyingMetric2018}, which is
based on \esg{}~\cite{classenLogicNonTerminatingGolog2008} and \es{}~\cite{lakemeyerSemanticCharacterizationUseful2011}, modal
variants of the Situation Calculus. We refer to \cite{hofmannLogicSpecifyingMetric2018} for a more complete description.

The language has two sorts: {\em object} and {action}. A special feature inherited from \es{} is the use of countably
infinite sets of {\em standard names} for both sorts. Standard object names syntactically look like constants, but are
intended to be isomorphic with the set of all objects of the domain. In other words, standard object names can be
thought of as constants that satisfy the unique name assumption and domain closure for objects.
We assume that object standard names include the rational numbers (including $\infty$) as a subsort.
Action standard names
are function symbols of any arity whose arguments are standard object names. Examples are $\mi{pick}(o)$ and
$\mi{goto}(l_1,l_2)$ for picking up an object and going from one location to another, respectively. Again, standard action names range
over all actions and satisfy the unique name assumption and domain closure for actions. One advantage of using standard
names is that quantifiers can be understood substitutionally when defining the semantics. For simplicity, we do not
consider function symbols other than actions. Formally the language is defined as follows:

\subsection{Syntax}
\begin{definition}[Symbols of \tesg{}]
The symbols of the language are from the following vocabulary:
\begin{enumerate}
  \item object variables $x_1, x_2, x_3, \ldots, y_1, \ldots$,
  \item action variables $a, a_1, a_2, a_3, \ldots$,
  \item object standard names $\mathcal{N}_O = \{ o_1, o_2, o_3, \ldots \}$,
  \item action standard names $\mathcal{N}_A = \{ p_1, p_2, p_3, \ldots \}$,
  \item fluent predicates of arity $k$: $\mathcal{F}^k: \{ F_1^k, F_2^k, \ldots \}$, e.g., $\mi{Holding(o)}$; we assume this list contains the distinguished predicate $\mi{Poss}$
  \item rigid predicates of arity $k$: $\mathcal{G}^k = \{ G_1^k, G_2^k, \ldots \}$,
  \item open, closed, and half-closed intervals, e.g., $[1, 2]$, with rational numbers as interval endpoints,
  \item connectives and other symbols: $=$, $\wedge$, $\vee$, $\neg$, $\forall$, $\square$, $[\cdot]$, $\llbracket \cdot \rrbracket$,
    $\until{I}$ (with interval $I$).
\end{enumerate}
\end{definition}

We denote the set of standard names as $\mathcal{N} = \mathcal{N}_O \cup \mathcal{N}_A$.


\begin{definition}[Terms of \tesg{}]
  The set of terms of \tesg{} is the least set such that
  \begin{inparaenum}[(1)]
    \item every variable is a term of the corresponding sort,
    \item every standard name is a term.
  \end{inparaenum}
\end{definition}

\begin{definition}[Formulas]
  The \emph{formulas of \tesg{}}, consisting of \emph{situation formulas} and \emph{trace formulas}, are the least set such that
  \begin{enumerate}
    \item if $t_1,\ldots,t_k$ are terms and $P$ is a $k$-ary predicate symbol,
      then $P(t_1,\ldots,t_k)$ is a situation formula,
    \item if $t_1$ and $t_2$ are terms,
      then $(t_1 = t_2)$ is a situation formula,
    \item if $\alpha$ and $\beta$ are situation formulas,
      $x$ is a variable, $\delta$ is a program (defined below), and
      $\phi$ is a trace formula,
      then $\alpha \wedge \beta$, $\neg \alpha$, $\forall x.\, \alpha$,
      $\square\alpha$, $[\delta]\alpha$,
      and $\llbracket \delta \rrbracket \phi$ are situation formulas,
    \item if $\alpha$ is a situation formula, it is also a trace formula,
    \item if $\phi$ and $\psi$ are trace formulas, $x$ is a variable, and $I$ is
      an interval,
      then $\phi \wedge \psi$, $\neg \phi$, $\forall x.\,\phi$,
      and $\phi \until{I} \psi$ are also trace formulas.
  \end{enumerate}
\end{definition}

A predicate symbol with standard names as arguments is called a \emph{primitive formula}, and we denote the set of
primitive formulas as $\mathcal{P}_F$.  We read $\square \alpha$ as ``$\alpha$ holds after executing any sequence of
actions'', $[\delta] \alpha$ as ``$\alpha$ holds after the execution of program $\delta$'',
$\llbracket \delta \rrbracket \alpha$ as ``$\alpha$ holds during the execution of program $\delta$'',
$\phi \until{I} \psi$ as ``$\phi$ holds \emph{until} $\psi$ holds, and $\psi$ holds within interval $I$''.

  A formula is called {\em static}  if it contains no $[\cdot]$,
      $\square$, or $\llbracket \cdot \rrbracket$ operators. It is called {\em fluent} if it is static and does not mention $\mi{Poss}$.

We also write $< c$, $\leq c$,  $= c$, $> c$, and $\geq c$ for the respective intervals
$[0, c)$, $[0, c]$, $[c, c]$, $(c,\infty)$, and $[c, \infty)$.
We use the short-hand notations
$\fut{I} \phi \eqdef (\top \until{I} \phi)$ (\emph{future}) and
$\glob{I}\phi \eqdef \neg \fut{I} \neg \phi$
(\emph{globally}).
For intervals, $c + [s,e]$ denotes the interval
$[s+c, e+c]$, similarly for $c + (s, e)$, $c + [s, e)$, and $c + (s, e]$.
We also omit the interval $I$ if $I = [0, \infty)$, e.g., $\phi \until{} \psi$ is short for $\phi \until{[0,\infty)} \psi$.

\newcommand*{\smid}{\ensuremath{\:\mid\:}}

Finally we define the syntax of \golog{} programs referred to by the operators $[\delta]$ and $\llbracket \delta \rrbracket$:

\begin{definition}[Programs]
  \[
    \delta ::= t \smid \alpha? \smid \delta_1 ; \delta_2 \smid \delta_1|\delta_2
    \smid \pi x.\, \delta
    \smid \delta_1 \| \delta_2 \smid \delta^*
  \]
  where $t$ is an action term and $\alpha$ is a static situation formula. A
  program consists of actions $t$, tests $\alpha?$, sequences
  $\delta_1;\delta_2$, nondeterministic branching $\delta_1 | \delta_2$,
  nondeterministic choice of argument $\pi x.\, \delta$, interleaved concurrency
  $\delta_1 \| \delta_2$, and nondeterministic iteration $\delta^*$.
\end{definition}

We also use the abbreviation $\mi{nil} \eqdef \top?$ for the empty program that always succeeds.
We remark that the above program constructs are a proper subset of the original \congolog~
\cite{degiacomoConGologConcurrentProgramming2000}. We have left out other constructs such as prioritized concurrency for
simplicity.

\subsection{Semantics}

\begin{definition}[Timed Traces]
  A \emph{timed trace} is a finite timed sequence of action standard names with monotonically non-decreasing time.
Formally, a trace $\pi$ is a mapping $\pi:\mathbb{N} \rightarrow \mathcal{P}_A \times \timedom$, and for any $i,j \in \mathbb{N}$ with $\pi(i) = \left(\sigma_i,t_i\right)$, $\pi(j) = \left(\sigma_j,t_j\right)$ : If $i < j$, then $t_i \leq t_j$.
\end{definition}

We denote the set of timed traces as $\mathcal{Z}$.
For a timed trace $z = \left(a_1,t_1\right) \ldots \left(a_k,t_k\right)$, we define $\mi{time}(z) \eqdef
t_k$ for $k > 0$ and $\mi{time}(\la\ra) \eqdef 0$,
i.e., $\mi{time}(z)$ is the time value of the last action in $z$.
We define the timed trace $z^0$ where all actions occur at time $0$ as
$z^0 = \left(a_1, 0\right)\left(a_2, 0\right) \ldots \left(a_n, 0\right)$.

\begin{definition}[World]
  Intuitively, a world $w$ determines the truth of fluent predicates, not just initially, but after any (timed) sequence of actions.
  Formally, a world $w$ is a mapping $\mathcal{P}_F \times \mathcal{Z} \rightarrow \{ 0, 1 \}$. If $G$ is a rigid
  predicate symbol, then for all $z$ and $z'$ in $\mathcal{Z}$, $w[G(n_1,\ldots,n_k),z]=w[G(n_1,\ldots,n_k),z']$.  
\end{definition}

Similar to \es{} and \esg{}, the truth of a fluent after any sequence of actions is determined by a world $w$.
Different from $\es{}$ and $\esg{}$, we require all traces referred to by a world to contain time values for each
action.  This also means that in the same world, a fluent predicate $F(\vec{n})$ may have a different value after the
same sequence of actions if the actions were executed at different times, i.e.,
$w[F(\vec{n}, \langle \left(a_1,1\right) \rangle]$ may have a different value than
$w[F(\vec{n}, \langle \left(a_1,2\right) \rangle]$. However, for simplicity the actions considered in basic action
theories (see Section~\ref{sec:BAT}) do not make use of this feature.


Next we define the transitions programs may take in a given world $w$. In two places these refer to the satisfaction of
situation formulas (see Definition~\ref{def:tesg-truth} below).
\newcommand{\warrow}{\ensuremath{\overset{w}{\rightarrow}}}
\begin{definition}[Program Transition Semantics]\label{def:trans}
  The transition relation \warrow{} among configurations, given
  a world $w$, is the least set satisfying
  \begin{enumerate}
    \item $\la z, a \ra \warrow \la z \cdot \left(p,t\right), \mi{nil} \ra$, if $t \geq \mi{time}(z)$, and $w, z  \models \mi{Poss}(p)$
    \item $\la z, \delta_1;\delta_2 \ra \warrow
      \la z \cdot p, \gamma;\delta_2 \ra$,
      if $\la z,\delta_1 \ra \warrow \la z \cdot p, \gamma \ra$,
    \item $\la z, \delta_1;\delta_2 \ra \warrow \la z \cdot p, \delta' \ra$
      if $\la z, \delta_1 \ra \in \mathcal{F}^w$ and
      $\la z, \delta_2 \ra \warrow \la z \cdot p, \delta' \ra$
    \item $\la z, \delta_1 | \delta_2 \ra \warrow \la z \cdot p, \delta' \ra$
      if $\la z, \delta_1 \ra \warrow \la z \cdot p, \delta' \ra$
      or $\la z, \delta_2 \ra \warrow \la z \cdot p, \delta' \ra$
    \item $\la z, \pi x.\, \delta \ra \warrow \la z \cdot p, \delta' \ra$,
      if $\la z, \delta^x_n \ra \warrow \la z \cdot p, \delta' \ra$ for some
      $n\in\mathcal{N}_x$
    \item $\la z, \delta^* \ra \warrow \la z \cdot p, \gamma; \delta^* \ra$ if
      $\la z, \delta \ra \warrow \la z \cdot p, \gamma \ra$
    \item $\la z, \delta_1 \| \delta_2 \ra \warrow \la z \cdot p, \delta' \|
      \delta_2 \ra$ if $z, \delta_1 \warrow \la z \cdot p, \delta' \ra$
    \item $\la z, \delta_1 \| \delta_2 \ra \warrow \la z \cdot p, \delta_1 \|
      \delta' \ra$ if $z, \delta_2 \warrow \la z \cdot p, \delta' \ra$
  \end{enumerate}
  The set of final configurations \final{} is the smallest set such that
  \begin{enumerate}
    \item $\la z, \alpha? \ra \in \final$ if $w, z \models \alpha$,
    \item $\la z, \delta_1;\delta_2 \ra \in \final$
      if $\la z, \delta_1 \ra \in \final$ and $\la z, \delta_2 \ra \in \final$
    \item $\la z, \delta_1 | \delta_2 \ra \in \final$
      if $\la z, \delta_1 \ra \in \final$,
      or $\la z, \delta_2 \ra \in \final$
    \item $\la z, \pi x.\,\delta \ra \in \final$
      if $\la z, \delta^x_n \ra \in \final$ for some $n \in \mathcal{N}_x$
    \item $\la z, \delta^* \ra \in \final$
    \item $\la z, \delta_1 \| \delta_2 \ra \in \final$
      if $\la z, \delta_1 \ra \in \final$
      and $\la z, \delta_2 \ra \in \final$
  \end{enumerate}
\end{definition}

The program transition semantics is very similar to the semantics of \esg{}.
The only difference is in Rule 1, which has an additional constraint on the time, and which requires the action to be executable.

\begin{definition}[Program Traces]\label{def:program-trace}
  Given a world $w$ and a finite sequence of action standard names $z$, the set
  $\|\delta\|^z_w$ of \emph{finite timed traces of a program $\delta$} is
  \begin{multline*}
    \|\delta\|^z_w =
    \\
    \{ z' \in \mathcal{Z} \mid
      \la z, \delta \ra \warrow^* \la z \cdot z', \delta' \ra
    \text{ and } \la z \cdot z', \delta' \ra \in \final \}
  \end{multline*}
%
\end{definition}

\begin{definition}[Truth of Situation and Trace Formulas]\label{def:tesg-truth}
  Given a world $w \in \mathcal{W}$ and a situation formula $\alpha$, we define
  $w \models \alpha$ as $w,\langle\rangle \models \alpha$, where for any $z \in
  \mathcal{Z}$:
  \begin{enumerate}
    \item $w, z \models F(n_1,\ldots,n_k)$ iff $w[F(n_1,\ldots,n_k),z] = 1$;
    \item $w, z \models (n_1 = n_2)$ iff $n_1$ and $n_2$ are identical;
    \item $w, z \models \alpha \wedge \beta$ iff $w, z \models \alpha$ and
      $w, z \models \beta$;
    \item $w, z \models \neg \alpha$ iff $w, z \not\models \alpha$;
    \item $w, z \models \forall x.\, \alpha$ iff $w, z \models \alpha^x_n$ for
      every standard name of the right sort;
    \item $w, z \models \square \alpha$ iff $w, z \cdot z' \models \alpha$
      for all $z' \in \mathcal{Z}$;
    \item $w, z \models [\delta]\alpha$ iff for all finite
      $z' \in \|\delta\|^z_w$, $w, z \cdot z' \models \alpha$;
    \item $w, z \models \llbracket \delta \rrbracket \phi$ iff for all $\tau \in
      \|\delta\|^z_w$, $w, z, \tau \models \phi$.
  \end{enumerate}
  Intuitively, $[\delta]\alpha$ means that \emph{after every execution} of $\delta$, the situation formula $\alpha$ is true.
  $\llbracket \delta \rrbracket \phi$ means that \emph{during every execution} of $\delta$, the trace formula $\phi$ is true.

  The truth of trace formulas $\phi$ is defined as follows for $w \in
  \mathcal{W}$, $z, \tau \in \mathcal{Z}$:
  \begin{enumerate}
    \item $w, z, \tau \models \alpha$ iff $w, z \models \alpha$ and $\alpha$ is
      a situation formula;
    \item $w, z, \tau \models \phi \wedge \psi$ iff $w, z, \tau \models \phi$
      and $w, z, \tau \models \psi$;
    \item $w, z, \tau \models \neg \phi$ iff $w, z, \tau \not\models \phi$;
    \item $w, z, \tau \models \forall x.\, \phi$ iff
      $w, z, \tau \models \phi^x_n$ for all $n \in \mathcal{N}_x$;
    \item $w, z, \tau \models \phi \until{I} \psi$ iff there is a $z_1 \neq \la\ra$ such that
      \begin{enumerate}
        \item $\tau = z_1 \cdot \tau'$,
        \item $\mi{time}(z_1) \in \mi{time}(z) + I$,
        \item $w, z \cdot z_1, \tau' \models \psi$,
        \item for all $z_2 \neq z_1$ with $z_1 = z_2 \cdot z_3$: $w, z \cdot z_2,
          z_3 \cdot \tau' \models \phi$.
      \end{enumerate}
  \end{enumerate}
\end{definition}



\begin{definition}[Validity]
  A situation formula $\alpha$ is \emph{valid} (written $\models \alpha$) iff for every world $w,$~$w \models
  \alpha$.
  A trace formula $\phi$ is \emph{valid} ($\models \phi$) iff for every world $w$ and every trace $\tau$,~ $w, \langle\rangle, \tau \models \phi$.
\end{definition}

\subsection{Basic Action Theories}
\label{sec:BAT}

A \acf{BAT} defines the preconditions and effects of all actions of the domain, as well as the initial state:
\begin{definition}[\acl*{BAT}]
  Given a finite set of fluent predicates $\mathcal{F}$, a set $\Sigma \subseteq \tesg$
  of sentences is called a \acf{BAT} over $\mathcal{F}$ iff
  $\Sigma = \Sigma_0 \cup \Sigma_\text{pre} \cup \Sigma_\text{post}$, where
  $\Sigma$ mentions only fluents in $\mathcal{F}$ and
  \begin{enumerate}
    \item $\Sigma_0$ is any set of fluent sentences,
    \item $\Sigma_\text{pre}$ consists of a single sentence
      of the form $\square
      \mathit{Poss}(a) \equivspace \pi$, where $\pi$ is a fluent formula with free variable $a$.\footnote{Free variables
        are implicitly universal quantified from the outside. The modality $\Box$ has lower syntactic precedence than the connectives, and $[\cdot]$ has the highest priority.}
    \item $\Sigma_\text{post}$ is a set of sentences, one for each fluent predicate $F \in \mathcal{F}$, of the form $\square[a]F(\vec{x}) \equivspace
      \gamma_F$.
  \end{enumerate}
\end{definition}

The set $\Sigma_0$ describes the initial state, $\Sigma_\text{pre}$ defines the preconditions of all actions of the
domain, and $\Sigma_\text{post}$ defines action effects by specifying for each fluent of the domain whether the fluent
is true after doing some action $a$.

We will also consider \ac{BAT}s restricted to a finite domain of actions and objects:
\begin{definition}[Finite-domain \ac{BAT}]
  We call a \ac{BAT} $\Sigma$ a \emph{\ac{fd-BAT}} iff
  \begin{enumerate}
    \item each $\forall$ quantifier in $\Sigma$ occurs as $\forall x.\, \tau_i(x) \supset \phi(x)$,
      where $\tau_i$ is a rigid predicate, $i=o$ if $x$ is of sort object, and $i=a$ if $x$ is of sort action; \label{fd:quantifiers}
    \item $\Sigma_0$ contains axioms
      \begin{itemize}
      \item $\tau_o(x) \equiv (x = n_1 \vee x = n_2 \vee \ldots \vee x = n_k)$ and
      \item $\tau_a(a) \equiv (a = m_1 \vee a = m_2 \vee \ldots \vee a = m_l)$
      \end{itemize}
       where the $n_i$ and $m_j$ are object and action standard names,
       respectively. Also each $m_j$ may only mention object standard names $n_i$.
  \end{enumerate}
\end{definition}
We call a formula $\alpha$ that only mentions symbols and standard names from $\Sigma$ \emph{restricted to $\Sigma$}
and we denote the set of primitive formulas restricted to $\Sigma$ as $\mathcal{P}_\Sigma$ and the action standard names
mentioned in $\Sigma$ as $A_\Sigma$.
We also write $\exists x\mathbf{:}i.\,\phi$ for $\exists x.\, \tau_i(x) \wedge \phi$ and $\forall x\mathbf{:}i.\,\phi$
for $\forall x.\, \tau_i(x) \supset \phi$.
Since an \ac{fd-BAT} essentially restricts the domain to be finite, quantifiers of type object can be understood as abbreviations:
\begin{align*}
  \exists x\mathbf{:}\tau_o. \phi &\eqdef \bigvee_{i=1}^k \phi^x_{n_i},
  \\
  \forall x\mathbf{:}\tau_o. \phi &\eqdef \bigwedge_{i=1}^k \phi^x_{n_i},
\end{align*}
and similarly for quantifiers of type action.

In addition to a finite domain, we also restrict a \ac{BAT} such that it completely determines the initial situation:
\begin{definition}[determinate \ac{BAT}]
  A \ac{fd-BAT} $\Sigma$ is \emph{determinate} iff every for atomic formula $\alpha$ restricted to $\Sigma$, either
  $\Sigma_0 \models \alpha$ or $\Sigma_0 \models \neg\alpha$.
\end{definition}

Next, given a world $w$, we define a world $w_\Sigma$ that is consistent with $\Sigma$:
\begin{definition}
For any world $w$ and basic action theory $\Sigma$, we define a world $w_\Sigma$
which is like $w$ except that it satisfies the $\Sigma_\text{pre}$ and
$\Sigma_\text{post}$ sentences of $\Sigma$.
\end{definition}

\begin{lemma}[\cite{lakemeyerSemanticCharacterizationUseful2011}]
  For any $w$, $w_\Sigma$ exists and is uniquely defined.
\end{lemma}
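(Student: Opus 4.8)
The plan is to construct $w_\Sigma$ explicitly by induction on the number of actions in a trace, and then argue that this construction is forced, which yields uniqueness. The guiding intuition is that a \ac{BAT} leaves only the initial values of the ordinary fluents free: $\Sigma_\text{pre}$ pins down $\poss$ at every trace, and $\Sigma_\text{post}$ pins down every other fluent after any action in terms of the situation immediately before it. Accordingly, I would make ``like $w$'' precise as: $w_\Sigma$ agrees with $w$ on every primitive formula $F(\vec n)$ with $F \neq \poss$ at the empty trace $\la\ra$, and on every rigid everywhere, while all remaining values are dictated by the axioms.

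For existence I would define $w_\Sigma$ stage by stage. In the base case, set $w_\Sigma[F(\vec n), \la\ra] = w[F(\vec n), \la\ra]$ for every fluent $F \neq \poss$, and then put $w_\Sigma[\poss(p), \la\ra] = 1$ iff $w_\Sigma, \la\ra \models \pi^a_p$, as required by $\Sigma_\text{pre}$. This is well defined precisely because $\pi$ is a \emph{fluent} formula and hence does not mention $\poss$, so its truth at $\la\ra$ depends only on the ordinary fluent values already fixed. In the inductive step, assuming $w_\Sigma$ is defined on all traces of length $k$, consider $z' = z \cdot (p,t)$ with $|z| = k$: for each $F \neq \poss$ and tuple $\vec n$, set $w_\Sigma[F(\vec n), z'] = 1$ iff $w_\Sigma, z \models (\gamma_F)^{a,\vec x}_{p,\vec n}$, following the $\Sigma_\text{post}$ axiom for $F$; this is well defined because $\gamma_F$ is evaluated at the shorter trace $z$, where all values are already fixed. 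Finally set $\poss$ at $z'$ via $\Sigma_\text{pre}$ as before. By construction $w_\Sigma$ satisfies every instance of $\Sigma_\text{pre}$ and $\Sigma_\text{post}$, and agrees with $w$ at $\la\ra$ on the non-$\poss$ fluents, establishing existence.

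For uniqueness I would take any world $w'$ that agrees with $w$ on the non-$\poss$ fluents at $\la\ra$ and satisfies $\Sigma_\text{pre}$ and $\Sigma_\text{post}$, and show $w' = w_\Sigma$ by the same induction. At $\la\ra$ the non-$\poss$ fluents coincide by assumption, and $\poss$ is then forced by the unique reading of $\Sigma_\text{pre}$; at each longer trace the non-$\poss$ fluents are forced by $\Sigma_\text{post}$ from the inductively equal predecessor, and $\poss$ again by $\Sigma_\text{pre}$. Hence $w'$ and $w_\Sigma$ assign the same value to every primitive formula at every trace, so they are identical.

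The step needing the most care is the well-foundedness of the definition, i.e.\ ruling out any circular dependency between $\poss$ and the ordinary fluents within a single trace. This is exactly where the hypothesis that $\pi$ is a fluent formula, not mentioning $\poss$, is essential: it stratifies each trace so that the ordinary fluents are computed first (from the predecessor trace) and $\poss$ afterwards, with no feedback. A secondary point to treat cleanly is that worlds are indexed by \emph{timed} traces, so the induction must proceed on the number of actions uniformly over all admissible time stamps $t$; since the axioms considered here are time-independent, the value assigned to a fluent at $z \cdot (p,t)$ does not depend on $t$, which keeps the induction uniform and the resulting $w_\Sigma$ well defined on all of $\mathcal{Z}$.
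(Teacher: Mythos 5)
Your construction is correct and is essentially the standard argument for this lemma, which the paper itself does not re-prove but imports from the cited work on \es{}: induction on the length of the (timed) trace, with the non-$\poss$ fluents at $\la\ra$ copied from $w$, $\Sigma_\text{post}$ forcing each fluent's value at $z \cdot (p,t)$ from the values at $z$, and $\Sigma_\text{pre}$ forcing $\poss$ afterwards, the stratification being sound because $\pi$ (and each $\gamma_F$) is a fluent formula. Your closing remarks on well-foundedness and on carrying the induction uniformly over all time stamps are exactly the points that need care in the timed setting, and you handle them correctly.
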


For a determinate \ac{BAT} over a set of fluent predicates $\mathcal{F}$, we can show that $\Sigma$ fully determines the
truth of every fluent $f \in \mathcal{F}$, not only initially, but after any sequence of actions:
\begin{lemma}\label{lma:det-bat}
  Let $\Sigma$ be a determinate \ac{BAT} over $\mathcal{F}$, $\delta$ a program over $\Sigma$ and $w, w'$ two worlds,
  and $z \in \mathcal{Z}$ a finite trace such that
  $\la \la\ra, \delta \ra \overset{w_\Sigma}{\longrightarrow^*} \la z, \delta' \ra$.
  Then
  \begin{enumerate}
    \item $\la \la\ra, \delta \ra \overset{w'_\Sigma}{\longrightarrow^*} \la z, \delta' \ra$, \label{lma:det-bat:traces}
    \item for every primitive formula $F\left(\vec{t}\right)$ with $F \in \mathcal{F}$:
      $w_\Sigma[F(\vec{t}),z] = w'_\Sigma[F(\vec{t}), z]$
      \label{lma:det-bat:fluents}
  \end{enumerate}
\end{lemma}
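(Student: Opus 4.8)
The plan is to prove the two claims in the order $(2)$ then $(1)$, since agreement of the fluent values is exactly what makes the world-dependent side conditions of the transition rules transfer. I would first isolate a self-contained auxiliary fact: for any fluent formula $\chi$ restricted to $\Sigma$ and any trace $z$, if $w_\Sigma$ and $w'_\Sigma$ assign the same value to every primitive $\mathcal{F}$-atom at $z$, then $w_\Sigma, z \models \chi$ iff $w'_\Sigma, z \models \chi$. This is a routine structural induction on $\chi$: equality atoms are world-independent; rigid atoms are trace-independent by the definition of a world and are pinned down at $\langle\rangle$ by determinacy; and the guarded quantifiers of an \ac{fd-BAT} unfold into finite conjunctions and disjunctions over the domain names, so the Boolean and quantifier cases lift immediately.

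With this fact available, I would prove $(2)$ by induction on $|z|$, and crucially for \emph{all} traces $z \in \mathcal{Z}$, not merely reachable ones, since $w_\Sigma$ and $w'_\Sigma$ are total mappings that satisfy $\Sigma_\text{post}$ at every trace (the $\square$ in the successor-state axioms), and since the \ac{BAT} actions are time-independent so the time stamps do not affect fluent values. For the base case $z = \langle\rangle$, determinacy gives that $\Sigma_0$ settles each primitive $\mathcal{F}$-atom, and as $w_\Sigma$ and $w'_\Sigma$ are consistent with $\Sigma_0$ on the initial situation they must coincide there. For the step $z = z'' \cdot (p,t)$, the successor-state axiom $\square[a]F(\vec{x}) \equiv \gamma_F$ of $\Sigma_\text{post}$ reduces $w_\Sigma[F(\vec{t}),z]$ to the truth of the fluent formula $\gamma_F$, suitably instantiated, at $z''$; the induction hypothesis supplies primitive-fluent agreement at $z''$, and the auxiliary fact then yields agreement of $\gamma_F$, hence of $F(\vec{t})$ at $z$.

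Next I would establish $(1)$ by induction on the number of transition steps, replaying each step of the $w_\Sigma$-run under $w'_\Sigma$. The key structural observation is that every rule of Definition~\ref{def:trans} appends exactly one action, so the trace before each step is a genuine prefix of $z$ and part $(2)$ is already available there. The only world-dependent side conditions are the executability premise $w_\Sigma, z \models \mathit{Poss}(p)$ in Rule~1 and the test premises $w_\Sigma, z \models \alpha$ hidden in the final-configuration set $\mathcal{F}^{w_\Sigma}$ used by the sequence rule. For $\mathit{Poss}$, the single axiom of $\Sigma_\text{pre}$ rewrites $\mathit{Poss}(p)$ as the fluent formula $\pi^a_p$, which transfers by $(2)$ and the auxiliary fact; for tests, the same argument applies after using $\Sigma_\text{pre}$ to eliminate any occurrence of $\mathit{Poss}$. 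Since the action name $p$, the time $t$, and any quantifier witness $n$ can be taken identical to those chosen in the $w_\Sigma$-run, the identical sequence of steps is available under $w'_\Sigma$ and reaches the same $\la z, \delta'\ra$.

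The main obstacle I anticipate is the bookkeeping that couples the two parts: the transition side conditions are evaluated at intermediate prefixes, so I must ensure part $(2)$ holds at every prefix (which is why I prove it for all traces up front), and I must route every world-dependent condition — including $\mathit{Poss}$ and any $\mathit{Poss}$ occurring inside a test — through $\Sigma_\text{pre}$ down to primitive $\mathcal{F}$-atoms before invoking agreement. The genuinely load-bearing use of the hypotheses is determinacy in the base case of $(2)$; everything above it is mechanical propagation through the successor-state and precondition axioms.
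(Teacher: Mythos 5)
Your proof is correct, and its engine is the same as the paper's: induction on the length of the trace, determinacy of $\Sigma_0$ at the base case, the single precondition axiom of $\Sigma_\text{pre}$ to transfer $\mi{Poss}$, and the successor-state axioms of $\Sigma_\text{post}$ to transfer fluent values, all mediated by the observation that agreement on primitive $\mathcal{F}$-atoms lifts to agreement on arbitrary fluent formulas. The organization differs: the paper runs one simultaneous induction over $|z|$ that establishes reachability under $w'_\Sigma$ and fluent agreement in lockstep along the prefixes of the given run, whereas you first prove part (2) for \emph{all} traces (exploiting that the $\square$ in $\Sigma_\text{post}$ makes the successor-state axioms hold everywhere and that \ac{BAT} actions are time-independent) and only then replay the transition sequence for part (1). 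Your decomposition buys two things: part (2) becomes a statement about the worlds alone, independent of the program, so there is no circularity worry about whether intermediate prefixes are ``already known'' to be reachable under $w'_\Sigma$; and you make explicit two points the paper's proof leaves implicit, namely the structural induction behind the lifting from primitive atoms to fluent formulas (including the finite unfolding of the guarded quantifiers of an \ac{fd-BAT}) and the fact that the final-configuration set $\final$ used in the sequencing rule of \autoref{def:trans} also carries world-dependent test conditions $w, z \models \alpha$ that must be transferred, not just the $\mi{Poss}$ premise of Rule 1. Both routes are sound; yours is slightly longer but more self-contained.
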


\begin{proof}
  By induction over the length of $z$.
  \begin{itemize}
    \item Let $z = \la\ra$. By definition of a determinate \ac{BAT}, we know that
      $w_\Sigma[F(\vec{t}), \la\ra] = 1 \Leftrightarrow w'_\Sigma[F(\vec{t}), \la\ra] = 1$.
    \item Let $z = z' \cdot \left(p, t\right)$. By induction, for each atomic formula $\alpha$, $w_\Sigma [\alpha,
      z'] = w'_\Sigma [\alpha, z']$, and thus, for each fluent situation formula $\gamma$, $w_\Sigma, z' \models \gamma$
      iff $w'_\Sigma, z' \models \gamma$.
      Furthermore, we know from $\la \la\ra, \delta \ra \overset{w_\Sigma}{\longrightarrow^*} \la z, \delta' \ra$ that
      for some $z', \delta''$,
      $\la z', \delta'' \ra \overset{w_\Sigma}{\longrightarrow} \la z, \delta' \ra$ and thus
      $w_\Sigma, z'  \models  \mi{Poss}(p)$.
      As both $w_\Sigma$ and $w'_\Sigma$ satisfy $\Sigma_\text{pre}$, it follows that
      $w'_\Sigma, z' \models  \mi{Poss}(p)$ and therefore
      $\la \la\ra, \delta \ra \overset{w'_\Sigma}{\longrightarrow^*} \la z, \delta' \ra$.
      As both $w_\Sigma$ and $w'_\Sigma$ satisfy $\Sigma_\text{post}$ and there is a successor state axiom for
      each $F$, it follows that $w_\Sigma[F(\vec{t}), z] = 1$ iff $w_\Sigma, z' \models \gamma_F(\vec{t})$ and
      $w'_\Sigma[F(\vec{t}), z] = 1$ iff $w'_\Sigma, z' \models \gamma_F(\vec{t})$ and thus
      $w_\Sigma[F(\vec{t}), z] = 1 \Leftrightarrow w'_\Sigma[F(\vec{t}), z] = 1$.
      \qedhere
  \end{itemize}
\end{proof}

%
%

In fact, we can show that $\Sigma$ fully determines possible traces of $\delta$, as well as the truth of any
formula restricted to $\Sigma$:
\begin{theorem}\label{thm:unique-sigma-worlds}
  Let $\Sigma$ be a determinate \ac{BAT}, $\delta$ a program over $\Sigma$ and $w, w'$ two worlds, and
  $z \in \|\delta\|_{w_\Sigma}$,
  $\alpha$ a situation formula and $\phi$ a trace formula, both restricted to $\Sigma$.
  Then:
  \begin{enumerate}
    \item $z \in \|\delta\|_{w'_\Sigma}$
    \item $w_\Sigma \models [\delta]\alpha \Leftrightarrow w'_\Sigma \models [\delta]\alpha$
    \item $w_\Sigma \models \llbracket\delta\rrbracket\phi \Leftrightarrow w'_\Sigma \models
      \llbracket\delta\rrbracket\phi$
  \end{enumerate}
\end{theorem}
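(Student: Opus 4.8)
The plan is to show that $w_\Sigma$ and $w'_\Sigma$ are indistinguishable with respect to everything restricted to $\Sigma$: the truth of fluents and $\mi{Poss}$ at \emph{every} trace, the program transition relation, and hence the truth of every formula. The engine is \autoref{lma:det-bat}. First I would observe that its fluent-agreement part (item~\ref{lma:det-bat:fluents}) in fact holds at every trace $z \in \mathcal{Z}$, not merely at reachable ones: the inductive step only uses that the right-hand side $\gamma_F$ of each successor state axiom is a fluent formula whose truth at the predecessor trace is settled by the inductive hypothesis together with the shared $\Sigma_\text{post}$ axioms, so reachability is needed solely for the transition-agreement part (item~\ref{lma:det-bat:traces}). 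From this all-trace fluent agreement and the shared $\Sigma_\text{pre}$ axiom, $\mi{Poss}(p)$ agrees at every trace as well, and a routine structural induction then shows that every \emph{static} situation formula restricted to $\Sigma$ has the same truth value in $w_\Sigma$ and $w'_\Sigma$ at every trace.

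Next I would lift this to program traces, proving $\|\gamma\|^z_{w_\Sigma} = \|\gamma\|^z_{w'_\Sigma}$ for every program $\gamma$ over $\Sigma$ and every trace $z$. For the inclusion $\subseteq$, given $z' \in \|\gamma\|^z_{w_\Sigma}$ there is a run $\la z, \gamma \ra \overset{w_\Sigma}{\longrightarrow^*} \la z \cdot z', \gamma' \ra$ with $\la z \cdot z', \gamma' \ra \in \mathcal{F}^{w_\Sigma}$; the transition argument of \autoref{lma:det-bat} applies verbatim to runs from an arbitrary configuration $\la z, \gamma \ra$ once fluent agreement is available at every trace, yielding the identical run under $w'_\Sigma$, and the final-configuration condition is preserved because membership in $\mathcal{F}^{w'_\Sigma}$ is decided by the static test formulas occurring in $\gamma'$, which agree by the first step. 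The reverse inclusion is symmetric. Part~1 of the theorem is the instance $z = \la\ra$, $\gamma = \delta$.

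Finally I would prove, by simultaneous structural induction on a situation formula $\alpha$ and a trace formula $\phi$ both restricted to $\Sigma$, that $w_\Sigma, z \models \alpha \Leftrightarrow w'_\Sigma, z \models \alpha$ and $w_\Sigma, z, \tau \models \phi \Leftrightarrow w'_\Sigma, z, \tau \models \phi$ for all $z, \tau \in \mathcal{Z}$. The atomic and Boolean cases reduce to the first step; the quantifier case is a finite conjunction or disjunction by the finite-domain restriction; $\square\alpha$ uses the all-trace fluent agreement with the inductive hypothesis on the smaller $\alpha$; the cases $[\gamma]\alpha$ and $\llbracket\gamma\rrbracket\phi$ combine the program-trace equality of the second step with the inductive hypothesis on the sub-formula evaluated at the extended traces; and the $\until{I}$ case applies the inductive hypothesis on the two argument formulas at the relevant sub-traces. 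Parts~2 and~3 then follow by setting $z = \la\ra$ and invoking the program-trace equality. The hard part will be the apparent circularity between program traces --- which depend on the test formulas buried inside programs --- and formula truth, which depends on program traces through $[\cdot]$ and $\llbracket\cdot\rrbracket$. This is broken by the fact that tests are static and therefore sit below the modal operators in the dependency order: the first step settles them without any reference to programs, which lets the second step proceed, which in turn feeds the third. A secondary subtlety is that $\square$ and $[\cdot]$ quantify over \emph{all} traces, including non-executable ones, which is precisely why the fluent agreement and the formula agreement must be established at every trace rather than only at reachable ones.
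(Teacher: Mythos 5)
Your proof is correct and follows the same route as the paper, whose entire proof is the single line ``Follows from \autoref{lma:det-bat}.'' You supply all the details that one-liner omits, and in particular you correctly identify and repair the one genuine subtlety it glosses over: the fluent-agreement clause of \autoref{lma:det-bat} must first be strengthened to hold at \emph{all} traces rather than only reachable ones (which works because the inductive step uses only the shared $\Sigma_\text{post}$ axioms, not reachability), since otherwise the $\square$, $[\cdot]$, and $\llbracket\cdot\rrbracket$ cases of the structural induction, and the agreement of the static test formulas inside programs, would not go through.
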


\begin{proof}
  Follows from \autoref{lma:det-bat}.
\end{proof}

For the purpose of this paper and in contrast to \cite{hofmannConstraintbasedOnlineTransformation2018}, we do not have
distinguished function symbols $\mi{now}$ and $\mi{time}$ that allow referring to time in a situation formula. In
particular, this means that we cannot define time-dependent preconditions or effects in a \ac{BAT}. Thus, time is only
relevant for the truth of trace formulas. Also, a program's traces are not restricted with respect to time:
\begin{proposition}\label{lma:0-equivalence}
  Given a \ac{BAT} $\Sigma$, a program $\delta$,
  and a world $w$.
  Let $\tau_1, \tau_2$ be two traces with
  $\tau_1(i) = \left(a_i, t_i\right)$, $\tau_2(i) = \left(a_i, t'_i\right)$ for every $i$ (i.e., they contain the same
  action symbols but different time points).
  Then $\tau_1 \in \|\delta\|_{w_\Sigma}$ iff $\tau_2 \in \|\delta\|_{w_\Sigma}$.
\end{proposition}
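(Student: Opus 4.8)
The plan is to exploit that, under a \ac{BAT}, time affects the transition semantics of \autoref{def:trans} only through the monotonicity side-condition $t \geq \mi{time}(z)$ of Rule~1, and that this side-condition is automatically satisfied by every timed trace, since timed traces are non-decreasing in time by definition. Writing $z \sim z'$ when $z$ and $z'$ carry the same sequence of action standard names and differ only in their time values (so $\tau_1 \sim \tau_2$ by assumption), the goal reduces to proving that membership in $\|\delta\|_{w_\Sigma}$ is $\sim$-invariant. I would establish this in three layers: first that the truth of situation formulas in $w_\Sigma$ is $\sim$-invariant along executable traces; then that the transition relation $\overset{w_\Sigma}{\rightarrow}$ and the final set $\final$ are $\sim$-invariant; and finally that the program-trace set is.

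The core is the first layer, which I would prove as a single induction on the length of the common action sequence, establishing jointly for $\sim$-related $z, z'$ whose shared action sequence is executable in $w_\Sigma$ (each action possible at its prefix) that (a) the sequence is also executable from $z'$ and (b) $w_\Sigma[F(\vec n), z] = w_\Sigma[F(\vec n), z']$ for every $F \in \mathcal{F}$. The base case $z = z' = \la\ra$ is immediate, as initial fluent values carry no time. In the step $z = \hat z \cdot (p,t)$, $z' = \hat z' \cdot (p,t')$, the induction hypothesis gives agreement of all fluents at $\hat z$ and $\hat z'$; since $\Sigma_\text{pre}$ defines $\mi{Poss}(p)$ through a time-free fluent formula, $p$ is possible at $\hat z$ iff at $\hat z'$, and since $\Sigma_\text{post}$ specifies each $F$ by a time-free successor-state axiom $\square[a]F(\vec x) \equiv \gamma_F$, the truth of $F$ after the possible action $p$ is fixed by the truth of $\gamma_F$ at the prefix, independently of the timestamp, and that truth coincides at $\hat z$ and $\hat z'$ by the hypothesis. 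A routine structural induction over $\wedge, \neg, \forall$ (rigid predicates being time-independent by the definition of a world) lifts $\sim$-invariance from primitive formulas to all static situation formulas, in particular to $\mi{Poss}(p)$ and to every test $\alpha?$ occurring in $\delta$.

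The second layer is then bookkeeping. An induction on $\delta$ using the definition of $\final$ shows $\la z, \delta \ra \in \final$ iff $\la z', \delta \ra \in \final$ for executable $z \sim z'$, the only non-structural case $\alpha?$ relying on $\sim$-invariance of tests. A single-step lemma then states that for executable $z \sim z'$ with $t \geq \mi{time}(z)$ and $t' \geq \mi{time}(z')$, $\la z, \delta \ra \overset{w_\Sigma}{\rightarrow} \la z \cdot (p,t), \delta'' \ra$ iff $\la z', \delta \ra \overset{w_\Sigma}{\rightarrow} \la z' \cdot (p,t'), \delta'' \ra$; it is proved by induction along Rules~1--8, whose only time-sensitive premise is Rule~1's $t \geq \mi{time}(z)$, true on both sides by assumption, while the $\mi{Poss}(p)$ premise and the $\final$ side-condition of the sequence rule transfer by the first layer and the $\final$ claim.

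For the final layer, if $\tau_1 \in \|\delta\|_{w_\Sigma}$ then there is a derivation $\la \la\ra, \delta \ra \overset{w_\Sigma}{\longrightarrow^*} \la \tau_1, \delta' \ra$ with $\la \tau_1, \delta' \ra \in \final$ (\autoref{def:program-trace}). Every transition rule appends exactly one timed action, so the derivation has length equal to that of $\tau_1$ and its $i$-th configuration carries the length-$i$ prefix of $\tau_1$, which is executable. As $\tau_2$ is a timed trace, its timestamps are non-decreasing, so at each step the corresponding prefix of $\tau_2$ meets the monotonicity hypothesis for appending the next action; applying the single-step lemma prefix by prefix (prefixes being $\sim$-related) rebuilds a derivation $\la \la\ra, \delta \ra \overset{w_\Sigma}{\longrightarrow^*} \la \tau_2, \delta' \ra$, and the $\final$ claim gives $\la \tau_2, \delta' \ra \in \final$, so $\tau_2 \in \|\delta\|_{w_\Sigma}$; the converse is symmetric. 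I expect the main obstacle to be the first layer: $\sim$-invariance of fluents cannot be decoupled from executability, because a successor-state axiom pins down a fluent's value only after an action that is actually possible, so fluent values and $\mi{Poss}$-preconditions must be carried through one combined induction on trace length. Everything after that is a routine induction over program structure and over the derivation, with metric time entering solely through the already-discharged monotonicity condition.
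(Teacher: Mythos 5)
Your proof is correct and is essentially the argument the paper intends: the paper states this proposition without any proof, and your combined induction on trace length --- carrying executability and fluent agreement together because the successor-state axioms only constrain a fluent's value after an action that is actually possible --- is exactly the structure of the paper's proof of \autoref{lma:det-bat}, transplanted from comparing two worlds on one trace to comparing two timings of one action sequence, with the remaining layers (invariance of $\final$, of single transitions, and of $\|\delta\|_{w_\Sigma}$) being the routine bookkeeping you describe. The only loose end is the claim that Rule~1's side-condition $t \geq \mi{time}(z)$ is automatic for every timed trace: for the first action this requires $t'_1 \geq \mi{time}(\la\ra) = 0$, which holds only if $\timedom$ is read as the non-negative rationals (as in the paper's definition of timed words for MTL); this is a gap in the paper's setup rather than in your argument.
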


\subsubsection{A Simple Carrier Bot}
With the following determinate \ac{fd-BAT}, we describe a simple carrier bot that is able to move to locations and pick up
objects:
\begin{align}
  &\square \mi{Poss}(a) \equiv \nonumber
  \\
  &\exists s\mathbf{:}o \exists g\mathbf{:}o .\, a = \mi{s\_goto}(s,g)  \label{eqn:poss-start-goto}
  \wedge \neg \exists a'\mathbf{:}a .\, \mi{Perf}(a') \\
  &\vee \exists s\mathbf{:}o \exists g\mathbf{:}o .\, a = \mi{e\_goto}(s,g) \label{eqn:poss-end-goto} \wedge
  \mi{Perf}(\mi{goto}(s,g)) \\
  &\vee \exists o\mathbf{:}o,l\mathbf{:}o .\, a = \mi{s\_pick}(o) \label{eqn:poss-start-pick}
  \wedge \neg \exists a'\mathbf{:} a .\, \mi{Perf}(a') \\
  &\quad \wedge \mi{RAt}(l) \wedge \mi{At}(o,l) \nonumber \\
  &\vee \exists o\mathbf{:}o .\, a = \mi{e\_pick}(o) \wedge \label{eqn:poss-end-pick}
  \mi{Perf}(pick(o))
\end{align}
The precondition axioms state that it is possible to start the \texttt{goto} action ($\mi{s\_goto}$) if the robot is not
performing any action (\autoref{eqn:poss-start-goto}), it can stop the \texttt{goto} action if it is currently
performing it (\autoref{eqn:poss-end-goto}). Furthermore, it can start picking up an object if it is not performing any
other action and it is at the same position as the object (\autoref{eqn:poss-start-pick}). Finally, it can stop
picking if it is currently performing a \texttt{pick} action (\autoref{eqn:poss-end-pick}).

By splitting actions into start and stop actions, we can execute multiple actions concurrently. We will later insert
platform actions that are executed in addition and concurrent to the program's actions. Also, splitting actions into
start and stop actions allows us to model that only the start but not the end of an action is under the robot's
control. In \autoref{sec:synthesis}, we will let the environment control all end actions, i.e., the environment will
decide when an action ends.

In addition to the precondition axioms, we also define successor state axioms for all fluents of the domain:
\begin{align}
  \square[a]&\mi{RAt}(l) \equivspace \label{eqn:ssa-robotat}
  \exists s\mathbf{:}o.\, a = \mi{e\_goto}(s,l)) \\
            &\quad \vee \mi{RAt}(l) \wedge \neg \exists s'\mathbf{:}o\, \exists g'\mathbf{:}o .\, a = \mi{s\_goto}(s',g') \nonumber \\
  \square[a]&\mi{At}(p,l) \equivspace \label{eqn:ssa-at}
            \mi{At}(p,l)
            \wedge a \neq \mi{s\_pick}(p)
            \\
  \square[a]&\mi{Holding}(p) \equivspace \label{eqn:ssa-holding}
            a = \mi{e\_pick}(p) \vee \mi{Holding}(p)
            \\
  \square[a]&\mi{Perf}(a') \equivspace \label{eqn:ssa-performing}
            \\ &\nonumber
            \exists s\mathbf{:}o \exists g \mathbf{:}o.\, \left[a = \mi{s\_goto}(s,g) \right] \nonumber
            \vee \exists o \left[a = \mi{s\_pick}(o) \right]
            \nonumber
            \\
            &\vee
            \mi{Perf}(a') \wedge \nonumber
            \neg\exists s\mathbf{:}o \exists g \mathbf{:}o \left[ a = \mi{e\_goto}(s,g) \right] \nonumber \\
            &\qquad \wedge \neg\exists p\mathbf{:}o \left[ a = \mi{e\_pick}(p) \right]
            \nonumber
\end{align}

Initially, the robot is at $m_1$ and object $o_1$ is at $m_2$. Only $m_1$ is $\mi{Spacious}$, which we will use in
\autoref{sec:platform-models} as a requirement for arm calibration:
\begin{gather}
  \label{eqn:init}
  \Sigma_0 = \{
    \forall x\mathbf{:}o. \, \mi{RAt}(x) \equiv \left(x = m_1\right),
    \\ \nonumber
    \forall x\mathbf{:}o \, \forall y\mathbf{:}o. \, \mi{At}(x,y) \equiv \left(x = o_1 \wedge y = m_2\right),
    \\ \nonumber
    \forall x\mathbf{:}o. \, \mi{Spacious}(x) \equiv \left(x = m_1\right),
    \\ \nonumber
    \tau_o(x) \equiv \left(x = m_1 \vee x = m_2 \vee x = o_1\right),
    \\ \nonumber
    \tau_a(a) \equiv \left( a = \mi{s\_goto}(m_1, m_2) \vee \ldots \vee a = \mi{e\_pick}(o_1)\right)
  \}
\end{gather}

\begin{lstlisting}[language=golog,mathescape=true,columns=flexible,captionpos=below,frame=tb,float,caption={An abstract program to fetch an object.},label={lst:abstract-cleanup}]
$\pi l_r.\:\mi{RAt}(l_r)?; \pi o.\, \pi l_o.\, \mi{At}(o,l_o)?$;
  $\dogolog{goto}{l_r, l_o}; \dogolog{pick}{o};$
\end{lstlisting}

\noindent \autoref{lst:abstract-cleanup} shows a simple program that picks up one object.

\section{\mtl{} Synthesis}\label{sec:mtl-synthesis}
Timed automata~(\acs{TA}\acused{TA}) \cite{alurTheoryTimedAutomata1994,alurTimedAutomata1999} are a widely used model
for representing real-timed systems.  Their properties are often described with
\mtl{}~\cite{koymansSpecifyingRealtimeProperties1990}, a temporal logic that extends \ac{LTL} with metric time.  We
first summarize timed automata and \mtl{}, and then define the problem of controlling a \ac{TA} against an \ac{MTL}
specification, following \cite{bouyerControllerSynthesisMTL2006,ouaknineRecentResultsMetric2008}.

\paragraph{\ac{MTL}}

\ac{MTL} extends \ac{LTL} with timing constraints on the \emph{Until} modality.  One commonly used semantics for
\ac{MTL} is a \emph{pointwise semantics}, in which formulas are interpreted over timed words.

\begin{definition}[Timed Words]
  A timed word $\rho$ over a finite set of atomic propositions $P$ is a finite or infinite sequence
  $\left(\sigma_0,\tau_0\right)\left(\sigma_1,\tau_1\right)\ldots$
  where $\sigma_i \subseteq P$ and $\tau_i \in \mathbb{Q}_+$ such that the sequence $(\tau_i)$ is monotonically
  non-decreasing and non-Zeno.
  The set of timed words over $P$ is denoted as $TP^*$.
\end{definition}

For a timed word $\rho = \left(\sigma_0, t_0\right)\left(\sigma_1, t_1\right)\ldots$ and every $k \in \mathbb{N}$ with
$k \leq |\rho|$, we also write $\rho_k$ for the prefix $\left(\sigma_0, t_0\right)\ldots\left(\sigma_k, t_k\right)$.

\begin{definition}[Formulas of \mtl{}]
  Given a set $P$ of atomic propositions, the formulas of \mtl{} are built as follows:
  \[
    \phi ::= p \smid \neg \phi \smid \phi \wedge \phi \smid \phi \until{I} \phi
  \]
\end{definition}

We use the same abbreviations as for \tesg{}, i.e.,
$\fut{I} \phi \eqdef (\top \until{I} \phi)$ (\emph{future}) and
$\glob{I}\phi \eqdef \neg \fut{I} \neg \phi$
(\emph{globally}). As in \tesg{}, we may omit the interval $I$ if $I = [0, \infty)$.
For a given set of atomic propositions $P$, we denote the language of MTL formulas over $P$ as $\mathcal{L}_{\text{MTL}}(P)$.

\begin{definition}[Pointwise semantics of \mtl{}]\label{def:mtl-semantics}
  Given a timed word $\rho = \left(\sigma_0, \tau_0\right) \left(\sigma_1, \tau_1\right) \ldots$ over
  alphabet $P$ and an \mtl{} formula $\phi$, $\rho, i \models \phi$ is defined
  as follows:
  \begin{enumerate}
    \item $\rho, i \models p$ iff $p \in \sigma_i$
    \item $\rho, i \models \neg \phi$ iff $\rho, i \not\models \phi$
    \item $\rho, i \models \phi_1 \wedge \phi_2$ iff $\rho_i \models \phi_1$ and $\rho_i \models \phi_2$
    \item $\rho, i \models \phi_1 \until{I} \phi_2$ iff there exists $j$ such that
      \begin{enumerate}
        \item $i < j < |\rho|$,
        \item $\rho, j \models \phi_2$,
        \item $\tau_j - \tau_i \in I$,
        \item and $\rho, k \models \phi_1$ for all $k$ with $i < k < j$.
      \end{enumerate}
  \end{enumerate}
\end{definition}
For an \ac{MTL} formula $\phi$, we also write $\rho \models \phi$ for $\rho, 0 \models \phi$
and we define the language of $\phi$ as $L(\phi) = \{ \rho \mid \rho \models \phi \}$.

\paragraph{Alternative definition of \ac{MTL}}
A commonly used alternative definition of \ac{MTL}, especially in the context of timed automata, requires the symbols in
timed words to be from $P$ instead of $2^P$, i.e.,
for a timed word $\rho = \left(\sigma_0, \tau_0\right)\left(\sigma_1, \tau_1\right)\ldots$ over P, we require $\sigma_i
\in P$ (instead of $\sigma_i \subseteq P$). Also, truth of an atomic formula $p$ is defined as:
\begin{enumerate}
 \item[1'.] $\rho, i \models p$ iff $\sigma_i = p$.
\end{enumerate}
Intuitively, a timed automaton describes a transition system with actions leading from one state to the other, where
formulas describe the occurrence of actions, e.g., $\glob{}[ a_1 \supset \fut{} a_2 ]$ says that whenever action $a_1$
occurs, $a_2$ will occur afterwards eventually. Here, the set of atomic propositions $P$ is the set of possible actions.
At most one action may occur at any point in time. Thus,
each $\sigma_i \in P$ defines the action that
occurs at time $\tau_i$.

In our context, formulas describe states of the world, e.g., $\mi{RAt}(m_1) \wedge \mi{Holding}(o_1)$ says that the
robot is at $m_1$ and currently holding $o_1$. Here, the set of atomic propositions is the set of primitive formulas
describing possible world states and multiple predicates may be true at the same time. Thus, each $\sigma_i \subseteq P$
describes the primitive formulas that are true at time $\tau_i$.

Let \ac{MTL}\textsubscript{$\in$} and  denote \ac{MTL} with the alternative semantics and $\models_{\in}$ satisfiability
in \ac{MTL}\textsubscript{$\in$}.
We can define mappings
between \ac{MTL} and
\ac{MTL}\textsubscript{$\in$}. The mapping $\cdot^*: \mathcal{L}_\text{MTL}(P) \rightarrow
\mathcal{L}_{\text{MTL}_\in}(2^P)$ maps a formula of \ac{MTL} into \ac{MTL}\textsubscript{$\in$}, where:
\begin{align*}
  p^* &= \bigvee_{\{ Q \subseteq P \mid p \in Q \}} Q
  \\
  \left(\neg \phi\right)^* &= \neg \phi^*
  \\
  \left(\phi \wedge \psi\right)^* &= \phi^* \wedge \psi^*
  \\
  \left(\phi \until{I} \psi\right)^* &= \phi^* \until{I} \psi^*
\end{align*}
Note that if $\phi$ is a formula over $P$, then $\phi^*$ is a formula over $2^P$, i.e., the atomic propositions in
$\phi^*$ are sub-sets of $P$. As an example, for $P = \{ a, b, c\}$:
$\left(a \wedge b\right)^* =
  \left(\left\{a\right\} \vee \left\{a, b\right\} \vee \left\{a, b, c\right\} \vee \left\{a, c\right\}\right) \wedge
  \left(\left\{b\right\} \vee \left\{a, b\right\} \vee \left\{a,b, c\right\} \vee \left\{b, c\right\}\right)$.

The mapping
$\cdot^+: \mathcal{L}_{\text{MTL}_\in}(P) \rightarrow \mathcal{L}_\text{MTL}(P)$ maps a formula of
\ac{MTL}\textsubscript{$\in$} into \ac{MTL} by enforcing that each $\sigma_i$ contains exactly one symbol from $P$:
\[\phi^+ = \phi \wedge \glob{} \bigvee_{p \in P} \left(p \wedge \bigwedge_{q \in P \setminus \{ p \}} \neg q\right)\]

\begin{theorem}\label{thm:alternative-mtl-semantics}
  For every $\phi \in \mathcal{L}_\text{MTL}(P)$ and $\psi \in \mathcal{L}_{\text{MTL}_\in}(P)$:
\begin{align*}
  {\models} \phi &\Leftrightarrow {\models_{\in}} \phi^*
  \\
  {\models} \psi^+ &\Leftrightarrow {\models_{\in}} \psi
\end{align*}
\end{theorem}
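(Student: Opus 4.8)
The plan is to collapse both biconditionals onto a single pointwise correspondence between the two satisfaction relations, proved by induction on formula structure, and then quantify over models. The observation that makes everything work is that the two notions of ``timed word'' are, in the first equivalence, literally the same object: a word $\rho = \left(\sigma_0,\tau_0\right)\left(\sigma_1,\tau_1\right)\ldots$ with each $\sigma_i \subseteq P$, which is what standard \ac{MTL} over $P$ reads, is exactly an \ac{MTL}\textsubscript{$\in$} word over the alphabet $2^P$, where each $\sigma_i \in 2^P$ is now a single symbol. Hence $\phi$ and $\phi^*$ are interpreted over one and the same set of words, and I only have to relate satisfaction symbol by symbol.

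For the first line I would show, by structural induction on $\phi \in \mathcal{L}_\text{MTL}(P)$, that for every such word $\rho$ and every index $i$, $\rho, i \models \phi \Leftrightarrow \rho, i \models_{\in} \phi^*$. The Boolean and until cases are immediate, since $\cdot^*$ commutes with $\neg$, $\wedge$, and $\until{I}$, so the induction hypothesis transfers to the subformulas unchanged. The only case carrying content is the atom $p$: under the standard semantics $\rho, i \models p$ iff $p \in \sigma_i$, whereas $\rho, i \models_{\in} p^* = \bigvee_{\{Q \subseteq P \mid p \in Q\}} Q$ holds iff $\sigma_i = Q$ for some $Q$ with $p \in Q$, i.e.\ again iff $p \in \sigma_i$; the two agree. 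Because the two classes of words coincide, quantifying over all of them turns this equivalence directly into ${\models}\phi \Leftrightarrow {\models_{\in}}\phi^*$.

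For the second line the two semantics act on words over the same symbol set $P$, so I would mediate through the bijection between \ac{MTL}\textsubscript{$\in$} words over $P$ and the \emph{singleton} standard words, those $\rho$ with $|\sigma_i| = 1$ at every position, sending a symbol $\sigma_i \in P$ to the set $\{\sigma_i\}$ and back. A second easy induction, whose atomic step uses $\sigma_i = p \Leftrightarrow p \in \{\sigma_i\}$, gives $\rho \models_{\in} \psi \Leftrightarrow \hat\rho \models \psi$, where $\hat\rho$ is the singleton encoding of $\rho$. It then remains to check that the added conjunct $\glob{}\bigvee_{p \in P}\bigl(p \wedge \bigwedge_{q \in P \setminus \{p\}} \neg q\bigr)$ is satisfied by a standard word exactly when that word is singleton, so that the standard models of $\psi^+$ are precisely the singleton encodings of the \ac{MTL}\textsubscript{$\in$} models of $\psi$; transporting (non-)models across the bijection then yields ${\models}\psi^+ \Leftrightarrow {\models_{\in}}\psi$.

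I expect the main obstacle to lie entirely in this second part, and specifically in the interplay between the singleton-constraint conjunct and the \emph{strict} until of \autoref{def:mtl-semantics}. Because $\until{I}$ requires a strictly later witness, $\glob{}\chi$ at position $i$ does not constrain position $i$ itself, and on finite words it says nothing about the final position either. Consequently the constraint $\glob{}\bigvee_{p}(\ldots)$ fails to pin down the first (and last) symbol, so I would have to argue carefully that these boundary positions do not break the intended alignment of the two word classes, and to confirm that the universal quantification over models transports in the direction actually claimed. The two structural inductions themselves are routine once the atomic cases above are in place.
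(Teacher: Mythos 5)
The paper states \autoref{thm:alternative-mtl-semantics} without any proof, so there is nothing to compare your argument against; I can only assess it on its own terms. Your first half is correct and complete: words with $\sigma_i \subseteq P$ are literally the MTL\textsubscript{$\in$} words over the alphabet $2^P$, the mapping $\cdot^*$ commutes with all connectives, and the atomic equivalence $p \in \sigma_i \Leftrightarrow (\sigma_i = Q$ for some $Q$ with $p \in Q)$ carries the induction. That part needs no further work.

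The second half has a genuine gap, and it is not merely the boundary-position issue you flag. The direction ${\models_{\in}}\psi \Rightarrow {\models}\psi^+$ requires \emph{every} standard timed word over $P$ to satisfy $\psi^+$, but the singleton requirement enters $\psi^+$ as a \emph{conjunct}, not as an antecedent: any word with an interior position where $\sigma_j = \emptyset$ or $|\sigma_j| \geq 2$ falsifies $\glob{}\bigvee_{p \in P}\bigl(p \wedge \bigwedge_{q \in P\setminus\{p\}}\neg q\bigr)$ and hence falsifies $\psi^+$ outright, independently of $\psi$. So for $\psi = p \vee \neg p$ one has ${\models_{\in}}\psi$ but ${\not\models}\,\psi^+$ (witness $\rho = (\emptyset,0)(\emptyset,0)(\emptyset,0)$). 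Your bijection between MTL\textsubscript{$\in$} words and singleton standard words, together with the observation that the conjunct characterizes singleton words (only at interior positions, because of the strict $\until{}$), yields at best a correspondence between $L_{\in}(\psi)$ and the singleton part of $L(\psi^+)$, and it does give the direction ${\models}\psi^+ \Rightarrow {\models_{\in}}\psi$; but no amount of care at the endpoints will produce the converse as stated. Closing this requires reinterpreting the claim --- either reading ${\models}\psi^+$ as quantifying over singleton words only, or replacing the conjunction in $\cdot^+$ by an implication (and then also repairing the unconstrained first and last positions). As written, your proof plan cannot be completed for the second biconditional because the biconditional itself fails under the paper's definitions.
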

%

In the following, we will use the semantics from \autoref{def:mtl-semantics}. However, related work on MTL synthesis
uses the other formalism. In particular, \autoref{thm:mtl-synthesis-finite} uses the the alternative MTL semantics from
above. With \autoref{thm:alternative-mtl-semantics}, we can apply those results while using the semantics from
\autoref{def:mtl-semantics}.

\paragraph{\ac{MTL} and \tesg{}}
Timed words in \ac{MTL} are similar to traces in \tesg{}. In fact, \tesg{} subsumes \ac{MTL}:
\begin{theorem}[\citet{hofmannLogicSpecifyingMetric2018}]
  Let $\phi$ be a sentence of \mtl. Then $\models_{\tesg{}} \phi$ iff $\models_{\text{MTL}} \phi$.
\end{theorem}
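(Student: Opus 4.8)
The plan is to reduce the equality of the two validity notions to a single truth-preserving correspondence between \mtl{} timed words and \tesg{} world/trace pairs, and then argue on counter-models. Concretely, I would prove a lemma of the following shape: for every \mtl{} timed word $\rho = \left(\sigma_0,\tau_0\right)\left(\sigma_1,\tau_1\right)\ldots$ there is a world $w$ and a finite timed trace $\tau$, and conversely, such that writing $z_i$ for the length-$i$ prefix of $\tau$ (so $\tau = z_i \cdot \tau''$),
\[
  \rho, i \models_{\text{MTL}} \phi \quad\Longleftrightarrow\quad w, z_i, \tau'' \models_{\tesg{}} \phi .
\]
Instantiating at $i = 0$ gives $\rho \models_{\text{MTL}} \phi$ iff $w, \la\ra, \tau \models_{\tesg{}} \phi$. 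Since every counter-model on one side then yields one on the other, $\models_{\text{MTL}}\phi$ and $\models_{\tesg{}}\phi$ coincide.

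The first ingredient is the construction of the corresponding models. A world in \tesg{} is an essentially unconstrained map $\mathcal{P}_F \times \mathcal{Z} \to \{0,1\}$, so it can realize any prescribed sequence of propositional states. Given $\rho$, I would take any finite trace $\tau$ whose $k$-th action carries time $\tau_k$ (the action symbols themselves are irrelevant, since \mtl{} atoms are matched by fluents and not by the actions of $\tau$) and define $w$ on prefixes by $w[p, z_k] = 1$ iff $p \in \sigma_k$, for the primitive formulas $p$ occurring as atoms of $\phi$. Conversely, from $w$ and $\tau$ I would read off $\sigma_k = \{ p \mid w[p, z_k] = 1 \}$ and $\tau_k = \mi{time}(z_k)$. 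This identifies \mtl{} position $k$ with the length-$k$ prefix of $\tau$; in particular the empty history corresponds to position $0$, so finite, possibly empty traces match finite words of length one more, and no empty-word degeneracy arises. Because neither semantics refers to absolute time — the \until{} clauses constrain only the difference $\tau_j - \tau_i$ — I may shift all time stamps so that $\tau_0 = 0$, making the time of the empty prefix ($\mi{time}(\la\ra) = 0$) agree with that of \mtl{} position $0$.

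With the correspondence fixed, the lemma follows by structural induction on $\phi$. The atomic case is immediate, since $w, z_i, \tau'' \models p$ iff $w[p, z_i] = 1$ iff $p \in \sigma_i$, and the $\neg$ and $\wedge$ cases are routine. The substantive case is $\phi_1 \until{I} \phi_2$. Here I would align the \tesg{} witness $z_1$ (a nonempty prefix of the remaining trace) with the \mtl{} witness index $j > i$ via $j = i + |z_1|$; check that the timing condition $\mi{time}(z_1) \in \mi{time}(z_i) + I$ of \autoref{def:tesg-truth} is exactly $\tau_j - \tau_i \in I$ of \autoref{def:mtl-semantics} under the time-stamp identification (both are differences, so the normalizing shift is harmless); and verify that the ``holds throughout'' requirement, ranging over the proper prefixes $z_2$ of $z_1$, coincides with the \mtl{} requirement over the indices strictly between $i$ and $j$. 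Applying the induction hypothesis at each intermediate prefix/position and at the witness then closes the case.

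I expect the \until{} case to be the main obstacle, and specifically the exact alignment of the range of intermediate points at which the left argument must hold: \autoref{def:tesg-truth} quantifies over all decompositions $z_1 = z_2 \cdot z_3$ with $z_2 \neq z_1$, whereas \autoref{def:mtl-semantics} quantifies over indices $k$ with $i < k < j$, so I would pin down precisely which endpoint (in particular the current position, corresponding to $z_2 = \la\ra$) is included on each side, and also reconcile the boundary condition $j < |\rho|$ with ``$z_1$ a prefix of the finite trace $\tau$''. If the two conventions for the left endpoint do not match verbatim, the correspondence lemma must be stated for the appropriately adjusted \until{}, or one must argue separately that any such discrepancy cannot change the validity of a \emph{sentence}; establishing this is the step I would scrutinize most. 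The remaining machinery — realizability of state sequences by worlds, shift-invariance of time, and the index bookkeeping — is routine once the \until{} case is settled.
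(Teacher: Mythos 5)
This theorem is imported from \citet{hofmannLogicSpecifyingMetric2018}; the present paper gives no proof at all, so there is nothing to compare your argument against line by line. Your overall strategy --- a truth-preserving correspondence between timed words and world/trace pairs, identifying MTL position $k$ with the length-$k$ prefix $z_k$, followed by transfer of counter-models --- is the natural and presumably intended route, and your handling of the time-stamp normalization ($\mi{time}(\la\ra)=0$ versus $\tau_0$, with a global shift) is correct since both semantics only ever constrain differences.

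However, the step you yourself flag as the one to scrutinize is a genuine gap, and your proposed fallback does not close it. Under the definitions \emph{as given in this paper}, the two \until{} conventions really do differ at the left endpoint: in \autoref{def:tesg-truth} the decomposition $z_2=\la\ra$ is always a proper prefix of the witness $z_1\neq\la\ra$, so the left argument is forced to hold at the \emph{current} position, whereas \autoref{def:mtl-semantics} only requires it at indices $k$ with $i<k<j$. This discrepancy is visible at the level of valid sentences: $(p \until{} q)\supset p$ is valid under the \tesg{} clause but is falsified in MTL by the word $\left(\emptyset,0\right)\left(\{q\},0\right)$ with witness $j=1$. So the escape route ``argue that the discrepancy cannot change the validity of a sentence'' fails, and the correspondence lemma as you state it is false in the \until{} case unless the two definitions are first reconciled against the conventions actually used in the source paper. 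A second, smaller gap: \tesg{} traces are finite while the MTL words here may be infinite, so the direction ``every MTL counter-model yields a \tesg{} counter-model'' needs either a restriction to finite words (as in \autoref{thm:mtl-synthesis-finite}) or an additional argument; your construction of $\tau$ from $\rho$ presupposes $\rho$ finite.
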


\paragraph{Symbolic transition systems and timed automata}
Intuitively, a timed automaton is a finite automaton extended with time. More specifically, a timed automaton has a
finite set of clocks; time may pass in the vertices of the graph, which are also called \emph{locations}. Transitions,
also called switches, are the edges of the graph. They are always instantaneous, may have clock constraints, and may
reset some clocks to zero.
Formally, we first define \acfp{STS}:

\begin{definition}[Symbolic Transition Systems and Timed Automata~\cite{bouyerControllerSynthesisMTL2006}]
Let $X$ be a finite set of variables (called \emph{clocks}).
The set $\mathcal{G}(X)$ of \emph{clock constraints} $g$ over $X$ is defined by the grammar $g ::= g \wedge g \mid x \bowtie c$, where $\mathord{\bowtie} \in \{ <, \leq, =, \geq, > \}$, $x \in X$, and $c \in \mathbb{Q}_{\geq 0}$.
A \emph{valuation} over $X$ is a mapping $\nu : X \rightarrow \mathbb{R}_{\geq 0}$. The set of valuations satisfying a
constraint $g$ is denoted as $\llbracket g \rrbracket$.
A granularity is defined by a triple $\mu = \left(X, m, K\right)$, where $X$ is a finite set of clocks, $m \in \mathbb{N}_{> 0}$, and $K \in \mathbb{N}$.
A constraint $g$ is $\mu$-granular if it only uses clocks from $X$ and each constant in $g$ is $\frac{\alpha}{m}$ with $\alpha \leq K$ and $\alpha \in \mathbb{N}$.

For alphabet $P$ and clocks $X$, a \emph{symbolic alphabet} $\Gamma$ is a finite subset of $2^P \times \mathcal{G}(X) \times 2^X$, where a symbolic action $\left(p,g,Y\right) \in \Gamma$ is interpreted as action $p$ can happen if the constraint $g$ is satisfied, with the clocks in $Y$ being reset after the action.
A symbolic word $\gamma = \left(a_1, g_1, Y_1\right)\left(a_2, g_, Y_2\right)\ldots$ over $\Gamma$ gives rise to a set of
timed words $\mi{tw}(\gamma)$ over $P$.

A \emph{\acf{STS}} over a symbolic alphabet $\Gamma$ based on $\left(P, X\right)$ is a tuple $\mathcal{T} = \left(S, s_0, \rightarrow, F\right)$, where $S$ is a possibly infinite set of states, $s_0 \in S$ is the initial state, $\mathnormal{\rightarrow} \subseteq S \times \Gamma \times S$ is the transition relation, and $F \subseteq S$ is a set of accepting state.
The \emph{timed language} accepted by an \ac{STS} $\mathcal{T}$ is denoted as $L(\mathcal{T})$.

A \ac{STS} is called \emph{deterministic} if there are no distinct transitions
$q \overset{a, g_1, Y_1}{\longrightarrow} q_1$ and $q \overset{a, g_2, Y_2}{\longrightarrow} q_2$ with
$ \llbracket g_1 \rrbracket \cap \llbracket g_2 \rrbracket \neq \emptyset$.

A \emph{\acf{TA}} is an \ac{STS} with finitely many states.
\end{definition}

We also want to compose \acp{STS}:
\begin{definition}[\ac{STS} Compositions]
  For two \ac{STS} $\mathcal{T}_1 = \la Q_1, q_0^1, \rightarrow_1, F_1 \ra$ over $\Gamma_1$ based on
  $\left(P_1, X_1\right)$ and $\mathcal{T}_2 = \la Q_2, q_0^2,
  \rightarrow_2, F_2 \ra$ over $\Gamma_2$ based on $\left(P_2, X_2\right)$,
the \emph{parallel composition} $\mathcal{T}_1 \parallel \mathcal{T}_2$ of $\mathcal{T}_1$ and
$\mathcal{T}_2$ is the \ac{STS} $\la Q, q_0, \rightarrow, F \ra$ where $Q = Q_1 \times Q_2$, $q_0 = \left(q_0^1,
q_0^2\right)$, $F = F_1 \times F_2$ and $\left(p_1, p_2\right) \overset{a,g,Y}{\longrightarrow} \left(q_1, q_2\right)$
iff $p_1 \overset{a,g_1,Y_1}{\longrightarrow} q_1$ and $p_2 \overset{a,g_2,Y_2}{\longrightarrow} q_2$ with $g = g_1
\wedge g_2$ and $Y = Y_1 \cup Y_2$.

If $P_1 \cap P_2 = \emptyset$, then the \emph{product \ac{STS}} $\mathcal{T}_1 \times
\mathcal{T}_2$ is the \ac{STS} $\la Q, q_0, \rightarrow, F \ra$ where $Q = Q_1 \times Q_2$, $q_0 = \left(q_0^1,
q_0^2\right)$, $F = F_1 \times F_2$ and $\left(p_1, p_2\right) \overset{a,g,Y}{\longrightarrow} \left(q_1, q_2\right)$
iff $p_1 \overset{a_1,g_1,Y_1}{\longrightarrow} q_1$, $p_2 \overset{a_2,g_2,Y_2}{\longrightarrow} q_2$, and $a = a_1
\cup a_2$, $g = g_1 \wedge g_2$, and $Y = Y_1 \cup Y_2$.
\end{definition}

In the parallel composition $\mathcal{T}_1 \parallel \mathcal{T}_2$, both $\mathcal{T}_1$ and
$\mathcal{T}_2$ take a transition for the same input simultaneously. The product $\mathcal{T}_1 \times
\mathcal{T}_2$ takes a transition on a symbol $a$ if $a$ is the union $a = a_1 \cup a_2$ of two input symbols
$a_1$ and $a_2$, such that $\mathcal{T}_1$ ($\mathcal{T}_2$) can take a transition on $a_1$ ($a_2$).

\paragraph{\ac{MTL} Control Problem}
Finally, we define the \ac{MTL} control problem. Intuitively, the goal is to synthesize a controller $\mathcal{C}$ that
\emph{controls} a \emph{plant} $\mathcal{P}$ against a specification of desired behaviors $\Phi$ such that all resulting
traces satisfy the specification $\Phi$ without blocking the plant $\mathcal{P}$. In this context,
\emph{control} means that $\mathcal{C}$ has control over some actions, while the environment controls the remaining
actions. Formally:
\begin{definition}[\ac{MTL} Control Problem \cite{bouyerControllerSynthesisMTL2006}]
  Let $P = P_C \cup P_E$ be an alphabet partitioned into a set of \emph{controllable} actions $P_C$ and a set of \emph{environment} actions $P_E$.
  A \emph{plant} $\mathcal{P}$ over $P$ is a deterministic \ac{TA}.
  Let the clocks used in $\mathcal{P}$ be $X_\mathcal{P}$ and $\mu = \left(X_\mathcal{P} \cup X_\mathcal{C}, m, K\right)$ be a granularity finer than that of the plant.
  Then, a $\mu$-controller for $\mathcal{P}$ is a deterministic \ac{STS} $\mathcal{C}$ over a symbolic alphabet based on $\left(P, X_\mathcal{P} \cup X_\mathcal{C}\right)$ having granularity $\mu$ and satisfying:
  \begin{enumerate}
    \item $\mathcal{C}$ does not reset the clocks of the plant: $q_\mathcal{C} \overset{a, g, Y}{\longrightarrow}
      q'_\mathcal{C}$ implies $Y \subset X_\mathcal{C}$,
    \item $\mathcal{C}$ does not restrict environment actions:
      if $\sigma \in L(\mathcal{P} \parallel \mathcal{C})$ and $\sigma \left(e, t\right) \in
        L(\mathcal{P})$ with $e \in P_E$,
        then $\sigma \cdot \left(e, t\right) \in L(\mathcal{P} \parallel \mathcal{C})$
    \item $\mathcal{C}$ is non-blocking:
      if $\sigma \in L(\mathcal{P} \parallel \mathcal{C})$ and $\sigma \left(a, t\right) \in
      L(\mathcal{P})$ and $\sigma \cdot \left(a, t\right) \in L(\mathcal{P})$,
      then $\sigma \cdot \left(b, t'\right) \in \mathcal{L}^*(\mathcal{P} \parallel \mathcal{C})$ for some
      $b \in P$ and $t' \in \mathbb{Q}$
    \item all states of $\mathcal{C}$ are accepting.
  \end{enumerate}
  For a timed language $\mathcal{L} \subseteq TP^*$, we say that a $\mu$-controller $\mathcal{C}$ \emph{controls}
  $\mathcal{P}$ against the specification of desired behaviors $\Phi$ iff $L(\mathcal{P} \parallel \mathcal{C})
  \subseteq L(\Phi)$.
  The control problem with fixed resources against desired behaviors is to decide, given a plant $\mathcal{P}$, a
  set of formulas $\Phi$, and a granularity $\mu$ finer than that of $\mathcal{P}$, whether there exists a
  $\mu$-controller $\mathcal{C}$ which controls $\mathcal{P}$ against the specification of desired behaviors
  $\Phi$.
\end{definition}

\citeauthor{bouyerControllerSynthesisMTL2006} showed that the synthesis problem is decidable, with some restrictions:
\begin{theorem}[\citet{bouyerControllerSynthesisMTL2006}]\label{thm:mtl-synthesis-finite}
  The control problem for fixed resources against \ac{MTL} specifications over finite words representing desired behaviors is decidable.
  Moreover, if there exists a controller, then one can effectively construct a finite-state one.
\end{theorem}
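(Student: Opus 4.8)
The plan is to reduce the control problem to solving a finite two-player game between the controller and the environment. The decisive structural fact is the fixed-resources assumption: because the granularity $\mu = (X_\mathcal{P} \cup X_\mathcal{C}, m, K)$ is fixed in advance, the controller may only use clocks from the fixed set $X_\mathcal{C}$ together with $\mu$-granular guards, of which there are only finitely many. Hence every candidate $\mu$-controller is a deterministic \ac{STS} over a \emph{finite} symbolic alphabet, so we are searching for a finite object within a finite space of building blocks. This is precisely what separates the decidable fixed-resources problem from the general (and much harder) synthesis problem.

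First I would build a finite symbolic arena from the plant. Taking the region automaton of $\mathcal{P}$, refined to the granularity $\mu$, yields a finite graph whose vertices record the discrete location of $\mathcal{P}$ together with the clock region over $X_\mathcal{P} \cup X_\mathcal{C}$. On this graph I set up a turn-based game: in each round the controller proposes a set of enabled controllable symbolic actions with $\mu$-granular guards and resets restricted to $X_\mathcal{C}$, after which the environment resolves the elapse of time (a region successor) and selects either an environment action admitted by $\mathcal{P}$ or one of the controller's proposed actions. The three structural constraints of the control problem — not resetting plant clocks, never disabling an environment action that $\mathcal{P}$ admits, and non-blocking — translate directly into legality conditions on the players' moves.

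Next I would fold the specification $\Phi$ into the winning condition, and here the restriction to finite words and to \ac{MTL} is essential. By the results of Ouaknine and Worrell, an \ac{MTL} formula over finite timed words is captured by a one-clock alternating timed automaton whose emptiness over finite words is decidable; I would run this automaton synchronously along the plays of the game, declaring a single play winning for the controller exactly when the generated finite timed word is accepted by $\Phi$ and the play reaches a configuration where both $\mathcal{P}$ and the controller are accepting. A controller then exists iff the controller player has a strategy winning \emph{every} consistent play, which is exactly the requirement $L(\mathcal{P} \parallel \mathcal{C}) \subseteq L(\Phi)$. Solving this game is decidable, and a winning strategy can be read off as a finite-state deterministic \ac{STS}; projecting it back through the region abstraction yields the required finite-state $\mu$-controller, giving both decidability and the effective construction.

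The hard part, I expect, is the last step: fusing the \emph{alternation} introduced by the specification automaton with the \emph{alternation} already present between controller and environment, while keeping the combined object solvable. Since \ac{MTL} over finite words is strictly more expressive than any deterministic timed automaton, one cannot simply determinise $\Phi$ and take a product; instead the argument must exploit the well-structured, well-quasi-ordered nature of one-clock alternating timed automata over finite words — exactly the property that breaks down once infinite words or unbounded clock resources are permitted. Making the two layers of alternation interact correctly, and proving that the finite-word restriction keeps the induced game decidable (at the price of non-primitive-recursive complexity), is the technical crux of the theorem.
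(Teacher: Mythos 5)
The paper itself gives no proof of this theorem: it is imported verbatim as a cited result of Bouyer, Bozzelli and Chevalier (2006), so there is no in-paper argument to compare against. Your sketch is a faithful high-level reconstruction of the proof in that reference --- the fixed-resources assumption yielding a finite symbolic alphabet for candidate controllers, a region-based game arena between controller and environment, the MTL specification compiled into a one-clock alternating timed automaton rather than determinised, and decidability obtained from the well-quasi-ordering on that automaton's configurations over finite words --- so it follows essentially the same route as the source rather than a genuinely different one.
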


%
We will use this result by constructing a \ac{TA} $\mi{PTA}(\Sigma, \delta)$ from a determinate \ac{fd-BAT} $\Sigma$ and
program $\delta$, modelling the platform as another \ac{TA} $\mathcal{R}$, and synthesizing a controller $\mathcal{C}$ that controls
the \ac{TA} $\mathcal{T} = \mi{PTA}(\Sigma, \delta) \times \mathcal{R}$ against the platform constraints $\Phi$.

\section{Constructing a \ac{TA} from a Program}\label{sec:pta}

We describe how to construct a \ac{TA} from a program $\delta$ over a determinate \ac{fd-BAT} $\Sigma$.
We do this by using
$P = \mathcal{P}_\Sigma \cup A_\Sigma$ as alphabet for the \ac{TA} $\mi{PTA}(\Sigma, \delta)$, i.e., the alphabet $P$
consists of all primitive formulas and action standard names from $\Sigma$.
In each transition, we encode the occurring action and the resulting situation, such that $p \overset{\sigma,
\emptyset, \emptyset}{\rightarrow} q$ for $\sigma = \{ f_1, \ldots, f_k, a \}$ if after doing action $a \in
\mathcal{A}_\Sigma$ in the corresponding situation, exactly the primitive formulas $\{ f_1, \ldots, f_k \} \subseteq
\mathcal{P}_\Sigma$ are true. By doing so, we obtain a correspondence of traces of the program $\delta$ with traces in
the \ac{TA}.

We assume that $\Sigma$ is a determinate \acl{fd-BAT} and $\delta$ is a program over
$\Sigma$.
We need to restrict $\Sigma$ to be a determinate \ac{BAT} as in the resulting timed automaton, each
transition encodes which primitive formulas are true in the respective situation. In particular, the transition $q_0 \rightarrow
S_0$ will encode the primitive formulas that are true in the initial situation. As we cannot encode disjunctions in such a
transition, we need $\Sigma_0$ to determine the truth for each primitive formula $f_i$. Also, as each transition can only contain
finitely many symbols, $\Sigma$ needs to be restricted to a finite domain. Furthermore, we assume that $\delta$ is
terminating, i.e., it only induces finite traces, which is necessary to guarantee that the resulting transition system
indeed has a finite number of states. We will further discuss those restrictions in \autoref{sec:conclusion}.

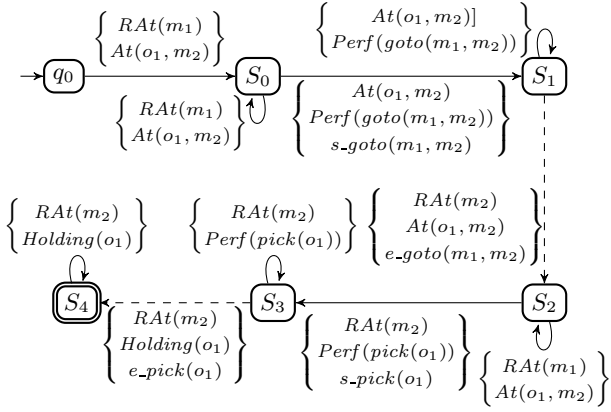
\begin{figure}[ht]
  \centering
  \begin{tikzpicture}
    \node[stn] (s) {$q_0$};
    \node[draw=none,left=3mm of s] (init) {};
    \draw[->] (init) -- (s);
    \node[stn,right=2.0cm of s] (s0) {$S_0$} edge [loop below] node[edgelabel,left]{
      $\begin{Bmatrix}
        \mi{RAt}(m_1) \\
        \mi{At}(o_1,m_2)
      \end{Bmatrix}$
      } ();
    \draw[->] (s) -- (s0) node [edgelabel, above, midway]{
      $\begin{Bmatrix}
        \mi{RAt}(m_1) \\
        \mi{At}(o_1,m_2)
      \end{Bmatrix}$
      };
    \node[stn,right=3.2cm of s0] (goto1) {$S_1$} edge [loop above] node[edgelabel,left]{
      $\begin{Bmatrix}
        \mi{At}(o_1,m_2)] \\
        \mi{Perf}(\mi{goto}(m_1, m_2))
      \end{Bmatrix}$
      } ();
    \draw[->] (s0) -- (goto1) node [edgelabel, below=-1mm, midway]{
      $\begin{Bmatrix}
        \mi{At}(o_1,m_2) \\
        \mi{Perf}(\mi{goto}(m_1, m_2)) \\
        \mi{s\_goto}(m_1, m_2)
      \end{Bmatrix}$
      };
    \node[stn,below=2.5cm of goto1] (agoto1) {$S_2$} edge [loop below] node[edgelabel,below=-1mm]{
      $\begin{Bmatrix}
        \mi{RAt}(m_1) \\
        \mi{At}(o_1,m_2)
      \end{Bmatrix}$
      } ();
    \draw[->,dashed] (goto1) -- (agoto1) node [edgelabel,left=-1.5mm,pos=0.7]{
      $\begin{Bmatrix}
        \mi{RAt}(m_2) \\
        \mi{At}(o_1,m_2) \\
        \mi{e\_goto}(m_1, m_2)
      \end{Bmatrix}$
      };
    \node[stn,left=3.0cm of agoto1] (pick1) {$S_3$} edge [loop above] node[edgelabel,above=-1mm]{
      $\begin{Bmatrix}
        \mi{RAt}(m_2) \\
        \mi{Perf}(\mi{pick}(o_1))
      \end{Bmatrix}$
      } ();
    \draw[->] (agoto1) -- (pick1) node [edgelabel,below,pos=0.6]{
      $\begin{Bmatrix}
        \mi{RAt}(m_2) \\
        \mi{Perf}(\mi{pick}(o_1)) \\
        \mi{s\_pick}(o_1)
      \end{Bmatrix}$
      };
    \node[double,stn,left=2.0cm of pick1] (f) {$S_4$} edge [loop above] node[edgelabel,above=-1mm]{
      $\begin{Bmatrix}
        \mi{RAt}(m_2) \\
        \mi{Holding}(o_1)
      \end{Bmatrix}$
      }();
    \draw[->,dashed] (pick1) -- (f) node [edgelabel,below=-1mm,midway]{
      $\begin{Bmatrix}
        \mi{RAt}(m_2) \\
        \mi{Holding}(o_1) \\
        \mi{e\_pick}(o_1)
      \end{Bmatrix}$
      };
  \end{tikzpicture}
  \caption{The \ac{TA} for the program from \autoref{lst:abstract-cleanup} and the initial situation from
  \autoref{eqn:init}. The dashed edges are controlled by the environment.}
  \label{fig:plan}
\end{figure}

\begin{definition}[Program Timed Automata]\label{def:pta}
  Given a program $\delta$ over a determinate \ac{fd-BAT} $\Sigma$.
  We define the timed automaton $\mi{PTA}(\Sigma, \delta) = \left(S, q_0, \rightarrow, F\right)$ as follows:
  \begin{enumerate}
    \item $q_0 \overset{P, \emptyset, \emptyset}{\longrightarrow} \left(\la\ra, \delta\right)$
      with $P = \{ f_i \in \mathcal{P}_\Sigma \mid w_\Sigma[ f_i, \la\ra] = 1 \}$
      \label{def:pta:init}
    \item $\left(z,\delta\right) \overset{P \cup \{ a \}, \emptyset, \emptyset }{\longrightarrow}
      \left(z \cdot a, \delta' \right)$ iff
      $\left(z^0,\delta\right) \overset{w_\Sigma}{\rightarrow} \left(\left(z \cdot a\right)^0, \delta'\right)$
      and $P = \{ f_i \in \mathcal{P}_\Sigma \mid w_\Sigma[f_i, \left(z \cdot a\right)^0] = 1 \} $
      \label{def:pta:action}
    \item $\left(z, \delta\right) \overset{P, \emptyset, \emptyset}{\longrightarrow} \left(z, \delta\right)$ with
      $P = \{ f_i \in \mathcal{P}_\Sigma \mid w_\Sigma[f_i, z] = 1 \} $
      \label{def:pta:self-loop}
    \item $\left(z, \delta\right) \in F$ iff $\la z^0, \delta \ra \in \mathcal{F}^{w_\Sigma}$
      \label{def:pta:final}
  \end{enumerate}
\end{definition}


A word $\rho$ of the \ac{TA} $\mi{PTA}(\Sigma, \delta)$ corresponds to a trace $\tau \in \|\delta\|_{w_\Sigma}$. We can
map $\rho$ to $\tau$:
\begin{definition}[Induced action trace]
  Given a word $\rho \in \mi{PTA}(\Sigma, \delta)$, we define the \emph{(action) trace $\mu(\rho)$ induced by $\rho$} inductively:
  \begin{itemize}
    \item If $\rho = \la\ra$, then $\mu(\rho) = \la\ra$
    \item If $\rho = \left(\{\ldots, a_i \}, t_i \right) \cdot \rho'$ for some action standard name $a_i \in A_\Sigma$,
      then $\mu(\rho) = \left(a_i, t_i\right) \cdot \mu(\rho')$
    \item Otherwise, if $\rho = \left(\sigma_i, t_i \right) \cdot \rho'$ and $\sigma_i \cap A_\Sigma = \emptyset$
      (i.e., $\sigma_i$ contains no action from $\Sigma$), then $\mu(\rho) = \mu(\rho')$
  \end{itemize}
\end{definition}



The trace $\mu(\rho)$ induced by an MTL word $\rho \in \mi{PTA}(\Sigma, \delta)$ is indeed a trace of the program:
\begin{lemma}\label{lma:trace-existence}
  Given a program $\delta$ over a determinate \ac{fd-BAT} $\Sigma$.
  Then:
  \begin{enumerate}
    \item For every $\rho \in L(\mi{PTA}(\Sigma, \delta))$: $\mu(\rho) \in \|\delta\|_{w_\Sigma}$.
    \item For every $\tau \in \|\delta\|_{w_\Sigma}$, there is a $\rho \in L(\mi{PTA}(\Sigma, \delta))$ such that
  $\mu(\rho) = \tau$.
  \end{enumerate}
\end{lemma}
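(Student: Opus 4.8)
The plan is to match, up to time stamps, the \emph{action transitions} of an accepting run of $\mi{PTA}(\Sigma,\delta)$ (clause~2 of \autoref{def:pta}) with a program run of $\delta$ under $w_\Sigma$ evaluated entirely at time $0$, and conversely. Two features must be handled carefully: first, the initial transition (clause~1) and the self-loops (clause~3) carry only primitive formulas and hence no action symbol, so they are invisible to $\mu$; second, $\mi{PTA}$ is defined on the time-$0$ traces $z^0$, whereas the words in $L(\mi{PTA}(\Sigma,\delta))$ and the traces in $\|\delta\|_{w_\Sigma}$ carry arbitrary monotone non-decreasing time stamps. The latter gap is bridged by \autoref{lma:0-equivalence}: membership in $\|\delta\|_{w_\Sigma}$ depends only on the sequence of action symbols, so I may freely pass between a trace and its time-$0$ variant.

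For part~(1), let $\rho\in L(\mi{PTA}(\Sigma,\delta))$ with an accepting run from $q_0$. Since $q_0$ has only the clause-1 transition, the run begins $q_0\to(\la\ra,\delta)$, emitting a letter without an action. Every later letter comes from clause~2 or clause~3; as self-loops leave the state fixed, deleting them the run visits $(z_0,\delta_0),(z_1,\delta_1),\dots,(z_m,\delta_m)$ with $z_0=\la\ra$, $\delta_0=\delta$, $z_i=z_{i-1}\cdot a_i$, consecutive pairs linked by clause~2. Unfolding clause~2 gives $\la z_{i-1}^0,\delta_{i-1}\ra\overset{w_\Sigma}{\longrightarrow}\la z_i^0,\delta_i\ra$, hence $\la\la\ra,\delta\ra\overset{w_\Sigma}{\longrightarrow^*}\la z_m^0,\delta_m\ra$. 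The final state reached is $(z_m,\delta_m)$, and since it is accepting, clause~4 gives $\la z_m^0,\delta_m\ra\in\mathcal{F}^{w_\Sigma}$, so $z_m^0\in\|\delta\|_{w_\Sigma}$. As $\mu(\rho)$ retains exactly the action letters, it has action sequence $z_m$ with monotone non-decreasing times, and \autoref{lma:0-equivalence} yields $\mu(\rho)\in\|\delta\|_{w_\Sigma}$.

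For part~(2), let $\tau=(a_1,t_1)\cdots(a_k,t_k)\in\|\delta\|_{w_\Sigma}$. By \autoref{lma:0-equivalence}, $\tau^0\in\|\delta\|_{w_\Sigma}$, so there is a run $\la\la\ra,\delta\ra\overset{w_\Sigma}{\longrightarrow^*}\la\tau^0,\delta'\ra$ with $\la\tau^0,\delta'\ra\in\mathcal{F}^{w_\Sigma}$. Because each program transition appends a single action, this run passes through the time-$0$ prefixes $z_i^0$ (with $z_i=a_1\cdots a_i$), each step being $\la z_{i-1}^0,\delta_{i-1}\ra\overset{w_\Sigma}{\longrightarrow}\la z_i^0,\delta_i\ra$; by clause~2 each lifts to an action transition $(z_{i-1},\delta_{i-1})\to(z_i,\delta_i)$ whose letter is $\{f\in\mathcal{P}_\Sigma\mid w_\Sigma[f,z_i^0]=1\}\cup\{a_i\}$. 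Prepending the clause-1 transition $q_0\to(\la\ra,\delta)$ and stamping the action letters with the original times $t_i$ (and the initial letter with time $t_1$, or $0$ if $\tau=\la\ra$) yields a valid timed word $\rho$; the final state $(z_k,\delta')$ lies in $F$ by clause~4, so $\rho$ is accepting. Since $\mu$ discards the initial letter and all primitive-formula content, $\mu(\rho)=(a_1,t_1)\cdots(a_k,t_k)=\tau$.

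The main obstacle is the time bookkeeping: as $\mi{PTA}$ is specified on time-$0$ traces while both claims quantify over arbitrarily-timed objects, one must invoke \autoref{lma:0-equivalence} at exactly the right points so that stamps can be stripped on entering the automaton and reattached on leaving it, and one must check that the sole timing side-condition in \autoref{def:trans} (the constraint $t\geq\mi{time}(z)$ in its first rule) is vacuously met at time $0$. A secondary point, needed both for $\mu(\rho)=\tau$ and for identifying action letters unambiguously, is the disjointness of $\mathcal{P}_\Sigma$ and $A_\Sigma$, so that precisely the clause-2 letters contribute an action while the clause-1 and clause-3 letters contribute none.
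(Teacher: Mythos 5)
Your proof is correct and takes essentially the same route as the paper: the paper's own argument is a one-line appeal to the construction of $\mi{PTA}(\Sigma,\delta)$ and \autoref{lma:0-equivalence}, and your write-up is precisely that argument carried out in detail (stripping self-loops and the initial letter, chaining clause-2 edges to program transitions over time-$0$ traces, and using \autoref{lma:0-equivalence} to move between arbitrary time stamps and $z^0$).
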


\begin{proof}
    Follows directly from the construction of $\mi{PTA}(\Sigma, \delta)$ and \autoref{lma:0-equivalence}.
\end{proof}


Furthermore, we can show that the \ac{MTL} word $\rho$ and the trace $\mu(\rho)$ entail the
same fluent state formulas at every point in time:
\begin{theorem}\label{thm:plan-ta-equivalence}
  Given a program $\delta$ over a determinate \ac{fd-BAT} $\Sigma$.
  Then:
  \begin{enumerate}
    \item For every $\rho \in L(\mi{PTA}(\Sigma, \delta))$ and every $k \leq |\rho|$,
      there is a $\tau = z \cdot \tau' \in \|\delta\|_{w_\Sigma}$ such that $\mu(\rho_k) = z$ and
      \[
        w_\Sigma, z \models \alpha \Leftrightarrow \rho_k \models \alpha
      \]
    \item For every $\tau \in \|\delta\|_{w_\Sigma}$ and every $z$ with $\tau = z \cdot \tau'$, there is a
      $\rho \in L(\mi{PTA}(\Sigma, \delta))$ such that for some $i \leq |\rho|$, $\mu(\rho_k) = z$ and
      \[
        w_\Sigma, z \models \alpha \Leftrightarrow \rho_k \models \alpha
      \]
  \end{enumerate}
\end{theorem}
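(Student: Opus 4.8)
The plan is to prove both directions from a single \emph{labelling invariant} relating, for each prefix $\rho_k$ of an accepting word of $\mi{PTA}(\Sigma,\delta)$, the state reached, the induced trace $\mu(\rho_k)$, and the symbol set $\sigma_k$ read at position $k$. Concretely, I would show by induction on $k$ that, whenever reading $\rho_k$ ends in a state whose action sequence is $z$, then (i) $\mu(\rho_k)=z$ (up to the time stamps $\rho$ supplies), and (ii) for every primitive formula $f\in\mathcal{P}_\Sigma$ we have $f\in\sigma_k$ iff $w_\Sigma[f,z]=1$. The base case $k=0$ is the initial transition $q_0\overset{P,\emptyset,\emptyset}{\longrightarrow}(\la\ra,\delta)$, whose label is by \autoref{def:pta} exactly $\{f\mid w_\Sigma[f,\la\ra]=1\}$ and which contributes no action to $\mu$. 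For the step I would distinguish the two remaining transition kinds of \autoref{def:pta}: an action transition $(z,\delta')\overset{P\cup\{a\},\emptyset,\emptyset}{\longrightarrow}(z\cdot a,\delta'')$ extends the induced trace by $(a,t)$ and carries the label $\{f\mid w_\Sigma[f,(z\cdot a)^0]=1\}\cup\{a\}$, while a self-loop $(z,\delta')\overset{P,\emptyset,\emptyset}{\longrightarrow}(z,\delta')$ leaves the induced trace unchanged and carries $\{f\mid w_\Sigma[f,z]=1\}$; in both cases (i) and (ii) are immediate from the construction and the definition of $\mu$.

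The one nontrivial point in the induction is that the transition labels are computed on the all-zero-time traces $(\cdot)^0$, whereas the theorem evaluates $w_\Sigma,z\models\alpha$ on the actual timed trace $z=\mu(\rho_k)$ carrying the time stamps of $\rho$. I would close this gap by observing that, since $\Sigma$ mentions no $\mi{now}/\mi{time}$ terms and its successor-state axioms in $\Sigma_\text{post}$ are time-independent, the truth of any fluent after a sequence of actions depends only on the action symbols and not on their time stamps; hence $w_\Sigma[f,z]=w_\Sigma[f,z^0]$ for every $f$ and every timed trace $z$. This is the same phenomenon underlying \autoref{lma:0-equivalence}, and it lets me identify $f\in\sigma_k$ with $w_\Sigma[f,z]=1$ irrespective of the time stamps.

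With the invariant in hand, I would lift (ii) from primitive formulas to an arbitrary static situation formula $\alpha$ restricted to $\Sigma$. Because $\Sigma$ is an \ac{fd-BAT}, every quantifier in $\alpha$ ranges over the finite domain and expands into the finite conjunction/disjunction given by the \ac{fd-BAT} abbreviations, every equality of standard names is decided by unique names, and the remaining atoms are primitive formulas, which are exactly the atomic propositions of $\mi{PTA}(\Sigma,\delta)$. Reading $\rho_k\models\alpha$ as the evaluation of this Boolean combination at position $k$ (where $f$ holds iff $f\in\sigma_k$), both $w_\Sigma,z\models\alpha$ and $\rho_k\models\alpha$ reduce to the same propositional evaluation over $\{f\in\mathcal{P}_\Sigma\mid f\in\sigma_k\}$, so they agree by (ii).

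Finally I would assemble the two statements. For part~1, given $\rho\in L(\mi{PTA}(\Sigma,\delta))$ I take $\tau=\mu(\rho)$, which lies in $\|\delta\|_{w_\Sigma}$ by \autoref{lma:trace-existence}; since $\mu(\rho_k)$ is a prefix of $\tau$, I may write $\tau=z\cdot\tau'$ with $z=\mu(\rho_k)$, and the desired equivalence is precisely the lifted (ii). For part~2, given $\tau\in\|\delta\|_{w_\Sigma}$ and a prefix $z$, \autoref{lma:trace-existence} supplies a $\rho\in L(\mi{PTA}(\Sigma,\delta))$ with $\mu(\rho)=\tau$; I then choose $k$ to be a position of $\rho$ at which exactly the actions of $z$ have been read (e.g.\ immediately after the transition producing the last action of $z$, or on any subsequent self-loop before the next action), so that $\mu(\rho_k)=z$, and again invoke (ii). I expect the main obstacle to be the bookkeeping for part~2 — exhibiting a position $k$ with $\mu(\rho_k)=z$ whose label is consistent with the situation at $z$, given that self-loops create several candidate positions — together with fixing the precise reading of $\rho_k\models\alpha$ that makes the equivalence meaningful.
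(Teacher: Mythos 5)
Your proposal is correct and follows essentially the same route as the paper: both parts are proved by induction over the position $k$ (resp.\ the length of $z$), using the three transition kinds of \autoref{def:pta} and \autoref{lma:trace-existence} to tie induced traces to program traces. You are in fact more explicit than the paper on two points it leaves implicit --- the reconciliation of the all-zero-time traces $z^0$ in the transition labels with the actual time stamps of $\rho$ (via \autoref{lma:0-equivalence}), and the lifting of the primitive-formula invariant to arbitrary static formulas $\alpha$ through the finite-domain quantifier expansion --- which strengthens rather than changes the argument.
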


\begin{proof}
  ~
  \begin{enumerate}
    \item Let $\rho \in L(\mi{PTA}(\Sigma, \delta))$. By \autoref{lma:trace-existence}, we know that $\tau\left(\rho\right) \in \|\delta\|_{w_\Sigma}$.
      It remains to be shown that for every $k \leq |\rho|$, there is a $z, \tau'$ such that
      $\tau = z \cdot \tau'$ and $\mu(\rho_k) = z$.
      We show the existence of $z, \tau'$ by induction over $k$:
      \begin{enumerate}
        \item Let $k = 0$. Thus $\rho_k = \left(\sigma_0, t_0\right)$. By definition of $\mi{PTA}(\Sigma, \delta)$, we know
          that $\sigma_0 = \Sigma_0$. For $z = \la\ra$, it follows that $\mu(\rho_k) = z$ and
          $w_\Sigma, z \models \alpha \Leftrightarrow w_\Sigma \models \alpha \Leftrightarrow \Sigma_0 \models \alpha
          \Leftrightarrow \rho' \models \alpha$.
        \item Let $k = l + 1$. By induction, there is a $z'$ such that $\tau = z' \cdot \tau'$,
          $z' = \mu(\rho_l)$, and
          $w_\Sigma, z' \models \alpha \Leftrightarrow \rho_l \models \alpha$.
          Now, we have two cases:
          \begin{enumerate}
            \item There is some action symbol $a \in \sigma_k$. Then, by definition of $\mi{PTA}(\Sigma, \delta)$,
              for $z = z' \cdot \left(a, t_k\right)$,
              $w_\Sigma, z \models \alpha \Leftrightarrow \rho_k \models \alpha$.
            \item There is no action symbol in $\sigma_k$. Then, by definition of $\mi{PTA}(\Sigma, \delta)$,
              $\sigma_k = \{ f_i \mid w_\Sigma[f_i, z'] = 1 \}$ and thus, for $z = z'$, it follows that
              $w_\Sigma, z \models \alpha \Leftrightarrow \rho_k \models \alpha$.
          \end{enumerate}
      \end{enumerate}
    \item Let $\tau \in \|\delta\|_{w_\Sigma}$. By \autoref{lma:trace-existence}, we know that there is a $\rho \in
      L(\mi{PTA}(\Sigma, \delta))$.
      It remains to be shown that for every $z$ with $\tau = z \cdot \tau'$, $\mu(\rho_k) = z$ and
      $w_\Sigma, z \models \alpha \Leftrightarrow \rho_k \models \alpha$.
      By induction over the length $i$ of $z$:
      \begin{enumerate}
        \item Let $i = 0$, i.e., $z = \la\ra$, and thus $w_\Sigma, z \models \alpha$ iff $\Sigma_0 \models \alpha$.
          By definition of $\mi{PTA}(\Sigma, \delta)$, $\rho_0 = \left(\Sigma_0, t_0\right)$ for some $t_0$. Thus,
          $\mu(\rho_0) = \la\ra$ and $\rho_0 \models \alpha$ iff $\Sigma_0 \models \alpha$.
        \item Let $i = j + 1$, i.e., $z = z' \cdot \left(a_i, t_i\right)$.
          By induction, $z' = \mu(\rho_l)$ for some $l$ and
            $w_\Sigma, z' \models \alpha \Leftrightarrow \rho_l \models \alpha$.
            By definition of $\mi{PTA}(\Sigma, \delta)$:
            \[
              \rho = \overbrace{\underbrace{\left(\sigma_0, t_0\right)\ldots\left(\sigma_l, t_l\right)}_{\rho_l}\left(\sigma_{l+1},
            t_{l+1}\right) \ldots (\underbrace{\{ \ldots, a_i \}}_{\sigma_k}, t_k \})}^{\rho_k}
            \]
            where none of $\sigma_{l+1},\ldots,\sigma_{k-1}$ contains any action symbol. Then, by definition of
            $\mi{PTA}(\Sigma, \delta)$, $\sigma_k = \{ f \mid w_\Sigma[f, z] = 1\}$, and thus
            $w_\Sigma, z \models \alpha \Leftrightarrow \rho_k \models \alpha$.
      \end{enumerate}
  \end{enumerate}
\end{proof}

\section{Platform Models}\label{sec:platform-models}

\begin{figure}
  \centering
  \begin{tikzpicture}[every node/.append style={stn},node distance=12mm]
    \node[double] (init) {$\mi{Init}$} edge [loop above] node[edgelabel] {$\{\mi{Ready}\}$}  ();
    \node[right=1.7cm of init] (calibrating) {$\mi{Calibrating}$} edge [loop above] node[edgelabel]
      {$\{\mi{Calibrating}\}$} ();
    \node[double,right=1.7cm of calibrating] (calibrated) {$\mi{Calibrated}$} edge [loop below] node[edgelabel]
      {$\{\mi{Calibrated}\}$} ();
    \node[draw=none,left=3mm of init] (init0) {};
    \draw[->](init0) -- (init);
    \draw[->] (init) -- (calibrating) node [edgelabel,below,midway] {
      \shortstack{
        $\begin{Bmatrix} \mi{Calibrating} \\ \mi{s\_calibrate} \end{Bmatrix}$
        \\ $t_p:=0$}
    };
    \draw[->,dashed] (calibrating) -- (calibrated) node [edgelabel,above,midway]{
      \shortstack{
        $\begin{Bmatrix} \mi{Calibrated} \\ \mi{e\_calibrate} \end{Bmatrix}$
        \\ $t_p = 5$} 
    };
  \draw[->] (calibrated) edge[bend left] node [edgelabel,below,midway] {
      \shortstack{
        $\begin{Bmatrix} \mi{Calibrating} \\ \mi{s\_calibrate} \end{Bmatrix}$
        \\ $t_p:=0$}
    } (calibrating) ;
  \end{tikzpicture}
  \caption{The platform model of a robot arm.}
  \label{fig:parking-arm}
\end{figure}
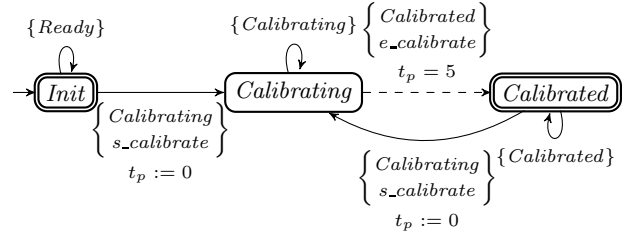

We model the robot platform with timed automata, an example is shown in \autoref{fig:parking-arm}.  Similar to PTAs, we
expect a platform model to use an alphabet with symbols of the form $\{ f_1, \ldots f_k, a \}$, where $a \in
\mathcal{N}_A \setminus A_\Sigma$ is a platform action and $f_i \in \mathcal{P}_F \setminus \mathcal{P}_\Sigma$ are
exactly those primitive formulas that are true after executing the action. We expect $f_i$ and $a$ to be from a
different alphabet than the \ac{BAT}, i.e., the platform does not have any effects on the abstract program and vice
versa. Further, to guarantee that the platform model does not block the PTA, we expect it to contain self loops, similar
to the self loops of a PTA, and as shown in \autoref{fig:parking-arm}.

\paragraph{Platform Constraints}
Given a determinate \ac{fd-BAT} $\Sigma$ and a platform model $\mathcal{R}$, we can formulate constraints over
$\Sigma$ and $\mathcal{R}$:
\begin{align}
  \label{eqn:pick-requires-calibrated-arm}
  &\glob{} \neg \mi{Calibrated} \supset \neg \fut{\leq 10} \exists p\mathbf{:}o.\, \mi{Perf}(\mi{pick}(p))
  \\
  \label{eqn:calibrate-with-enough-space}
  &\glob{} \mi{Calibrating} \supset \exists l\mathbf{:}o.\, \mi{RAt}(l) \wedge \mi{Spacious}(l)
\end{align}
The first constraint states that if the robot's arm is not calibrated, it must not perform a pick action in the
next 10 seconds, i.e., it must calibrate the arm before doing \texttt{pick}.
The second constraint says that if the robot is calibrating its arm, it must be at a location that provides enough space
for doing so, i.e., a $\mi{Spacious}$ location.

\section{Synthesizing a Controller}\label{sec:synthesis}

\begin{figure}[htb]
  \centering
  \begin{tikzpicture}[every node/.append style={stn,double}]
    \node (s0) {};
    \node[draw=none,left=3mm of s0] (init) {};
    \draw[->] (init) -- (s0);

    \node[right=2cm of init] (scalibrate) {};
    \draw[->] (s0) -- (scalibrate) node [edgelabel, above, midway]{
        $\begin{Bmatrix}
        \mi{RAt}(m_1) \\
        \mi{At}(o_1,m_2) \\
        \mi{Calibrating} \\
        \mi{s\_calibrate}
        \end{Bmatrix}$
      };
    \node[right=2cm of scalibrate] (ecalibrate) {};
    \draw[->,dashed] (scalibrate) -- (ecalibrate) node [edgelabel, above, midway]{
        \shortstack {
        $\begin{Bmatrix}
        \mi{RAt}(m_1) \\
        \mi{At}(o_1,m_2) \\
        \mi{Calibrated} \\
        \mi{e\_calibrate}
        \end{Bmatrix}$
        \\ $t_c := 0$
        }
      };

    \node[right=2cm of ecalibrate] (sgoto) {};
    \draw[->] (ecalibrate) -- (sgoto) node [edgelabel, above,midway]{
        $\begin{Bmatrix}
          \mi{At}(o_1,m_2) \\
          \mi{s\_goto}(m_1, m_2) \\
          \mi{Calibrated}
        \end{Bmatrix}$
    };
    \node[below=2.5cm of sgoto] (egoto) {};
    \draw[->,dashed] (sgoto) -- (egoto)  node [edgelabel, left, midway]{
      $\begin{Bmatrix}
        \mi{RAt}(m_2) \\
        \mi{At}(o_1,m_2) \\
        \mi{e\_goto}(m_1, m_2) \\
        \mi{Calibrated}
      \end{Bmatrix}$
      };
    \node[left=5cm of egoto] (spick) {} edge [loop left] node[edgelabel] {*}();
    \draw[->] (egoto) -- (spick) node [edgelabel, above, near end]{
        \shortstack {
        $\begin{Bmatrix}
        \mi{RAt}(m_2) \\
        \mi{At}(o_1,m_2) \\
        \mi{Perf}(\mi{pick}(o_1)) \\
        \mi{s\_pick}(o_1) \\
        \mi{Calibrated}
        \end{Bmatrix}$
        \\ $t_c > 10$
        }
      };
  \end{tikzpicture}
  \caption{A possible controller that controls the program from \autoref{fig:plan} and the platform from
  \autoref{fig:parking-arm} against the constraints from Equations \ref{eqn:pick-requires-calibrated-arm} and
  \ref{eqn:calibrate-with-enough-space}. The dashed edges are controlled by the environment.}
  \label{fig:controller}
\end{figure}
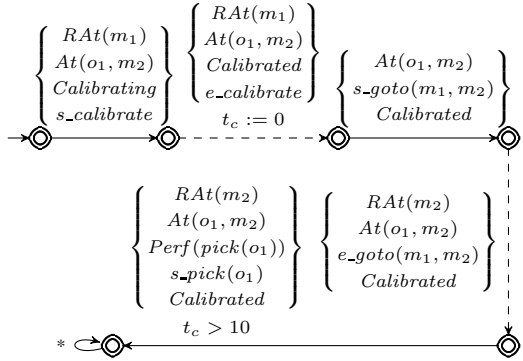

Using the \ac{TA} $\mi{PTA}(\Sigma, \delta)$ that represents the program $\delta$, the \ac{TA} $\mathcal{R}$ for the
platform, and constraints $\Phi$, we can use \ac{MTL} synthesis to synthesize a controller that executes $\delta$ while
satisfying the platform constraints. Specifically, we use
\begin{enumerate}
  \item the plant $\mathcal{P} = \mi{PTA}(\Sigma, \delta) \times \mathcal{R}$,
  \item as controllable actions $P_C$ all symbols that contain start actions of the program or the platform model,
    i.e., $P_C = \{ S \mid S \in P, \mi{s\_a}(\vec{t}) \in S \text{ for some } a(\vec{t}) \}$,
  \item as environment actions $P_E$ all symbols that contain end actions of the program or the platform model,
    i.e., $P_E = \{ E \mid E \in P, \mi{e\_a}(\vec{t}) \in E \}$ for some $a(\vec{t})$,
  \item a fixed granularity $\mu$, e.g.,  based on the robot platform's time resolution
\item the set of MTL formulas $\Phi$ as specification of desired behaviors.
\end{enumerate}

\autoref{fig:controller} shows a possible controller for our example program from \autoref{lst:abstract-cleanup}, the
platform from \autoref{fig:parking-arm}, and the constraints from \autoref{sec:platform-models}.

We can show that
\begin{inparaenum}[(1)]
  \item the resulting controller indeed satisfies the constraints and
  \item each of its traces is equivalent to some trace of the original program, i.e., the resulting controller satisfies
    the same situation formulas as the original program at any point of the execution:
\end{inparaenum}
\begin{theorem}\label{thm:synthesize-controller-properties}
  Let $\Sigma$ be a determinate \ac{fd-BAT}, $\delta$ a program over $\Sigma$ that only induces finite traces,
  $\mathcal{R}$ a platform model with symbols disjunct with the symbols from $\Sigma$, and let the constraints $\Phi$ be
  a set of MTL formulas.
  Let $\mathcal{C}$ be the synthesized \ac{MTL} controller with $\mathcal{L} = L\mleft(\left(\mi{PTA}(\Sigma, \delta)
  \times \mathcal{R}\right) \parallel \mathcal{C}\mright)$.
  Then:
  \begin{enumerate}
    \item $\mathcal{L} \subseteq L(\Phi)$, i.e., all constraints are satisfied.
    \item For every $\rho = \rho' \cdot \rho'' \in \mathcal{L}$, $\mu(\rho) \in \|\delta\|_{w_\Sigma}$, and for every fluent
      state formula restricted to $\Sigma$:
      \[
        \rho' \models \alpha \Leftrightarrow w_\Sigma, \mu(\rho') \models \alpha
      \]
  \end{enumerate}
\end{theorem}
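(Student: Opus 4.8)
The plan is to treat the two claims separately: the first is essentially definitional, while the second reduces to \autoref{thm:plan-ta-equivalence} after projecting a word of the plant onto the program automaton.

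For the first claim, recall that by construction $\mathcal{C}$ is a controller that \emph{controls} the plant $\mathcal{P} = \mi{PTA}(\Sigma,\delta) \times \mathcal{R}$ against the specification $\Phi$. By the definition of the \ac{MTL} control problem, controlling $\mathcal{P}$ against $\Phi$ means exactly $L(\mathcal{P} \parallel \mathcal{C}) \subseteq L(\Phi)$, i.e., $\mathcal{L} \subseteq L(\Phi)$. The only subtlety is that the synthesis result \autoref{thm:mtl-synthesis-finite} is stated for the alternative semantics $\mathrm{MTL}_\in$, so the controller is strictly synthesized against $\Phi^*$; by \autoref{thm:alternative-mtl-semantics}, $\mathcal{L} \subseteq L_\in(\Phi^*)$ is equivalent to $\mathcal{L} \subseteq L(\Phi)$, which gives the claim.

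For the second claim, the key observation is that the plant is a \emph{product} over disjoint alphabets. First I would note that $\mathcal{L} = L(\mathcal{P} \parallel \mathcal{C}) \subseteq L(\mathcal{P})$, since parallel composition reads the same letter in both components and accepts only when both do (and all states of $\mathcal{C}$ are accepting). Given $\rho = (\sigma_0,t_0)(\sigma_1,t_1)\ldots \in \mathcal{L}$, I would project it onto the program automaton by intersecting each letter with the program alphabet $\mathcal{P}_\Sigma \cup A_\Sigma$, obtaining a word $\hat\rho$ whose $i$-th letter is $\sigma_i \cap (\mathcal{P}_\Sigma \cup A_\Sigma)$. Because the product forms each letter as the union $a_1 \cup a_2$ of a $\mi{PTA}(\Sigma,\delta)$-letter and an $\mathcal{R}$-letter drawn from disjoint alphabets, this intersection recovers exactly the $\mi{PTA}(\Sigma,\delta)$-component; and since $\mi{PTA}(\Sigma,\delta)$ carries neither guards nor resets, no timing information is lost and $\hat\rho \in L(\mi{PTA}(\Sigma,\delta))$. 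It then remains to transport both conclusions across the projection. Since $\mu$ extracts only the program action names from $A_\Sigma \subseteq \mathcal{P}_\Sigma \cup A_\Sigma$, which are preserved by the intersection, we have $\mu(\rho) = \mu(\hat\rho)$ and likewise $\mu(\rho') = \mu(\hat\rho')$ for every prefix $\rho'$; hence $\mu(\rho) \in \|\delta\|_{w_\Sigma}$ follows from \autoref{thm:plan-ta-equivalence} applied to $\hat\rho$. For the formula equivalence, fix a decomposition $\rho = \rho' \cdot \rho''$ and let $k = |\rho'| - 1$ so that $\rho' = \rho_k$ (the empty case $\rho' = \la\ra$ is handled by the initial transition encoding $\Sigma_0$, matching the base case of \autoref{thm:plan-ta-equivalence}). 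Because $\alpha$ is a fluent state formula restricted to $\Sigma$, all of its atoms lie in $\mathcal{P}_\Sigma$, so its truth at the last letter is unaffected by discarding the platform symbols: $\rho' \models \alpha$ iff $\hat\rho' \models \alpha$. Applying \autoref{thm:plan-ta-equivalence} to $\hat\rho$ at index $k$ gives $\hat\rho' \models \alpha \Leftrightarrow w_\Sigma, \mu(\hat\rho') \models \alpha$, and combining with $\mu(\rho') = \mu(\hat\rho')$ yields $\rho' \models \alpha \Leftrightarrow w_\Sigma, \mu(\rho') \models \alpha$, as required.

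I expect the main obstacle to be making the projection argument airtight. One must verify that intersecting each product letter with $\mathcal{P}_\Sigma \cup A_\Sigma$ yields a genuine \emph{run} of $\mi{PTA}(\Sigma,\delta)$ and not merely a symbol sequence; this hinges on both the disjointness of the program and platform alphabets and on the clock-free nature of $\mi{PTA}(\Sigma,\delta)$. One must also be careful that a single product letter may simultaneously carry a program action and a platform action, and check that this disturbs neither $\mu$ nor the evaluation of $\Sigma$-restricted formulas, since both inspect only the $\Sigma$-part of each letter. The remaining steps are routine bookkeeping over prefixes, supported by the time-independence of program traces (\autoref{lma:0-equivalence}).
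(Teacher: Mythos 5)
Your proposal is correct and follows essentially the same route as the paper's own (very terse) proof: part~1 from the definition of the control problem together with \autoref{thm:mtl-synthesis-finite}, and part~2 by projecting each word of $\mathcal{L}$ onto the $\mi{PTA}(\Sigma,\delta)$ component using the disjointness of the program and platform alphabets and then invoking \autoref{thm:plan-ta-equivalence}. You merely spell out details the paper leaves implicit, such as the $\mathrm{MTL}$ versus $\mathrm{MTL}_\in$ translation and the verification that the projection is a genuine accepted run.
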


\begin{proof}
  ~
  \begin{enumerate}
    \item Follows directly from \autoref{thm:mtl-synthesis-finite}.
    \item First, note that $\mathcal{L} \subseteq L(\mi{PTA}(\Sigma, \delta) \times \mathcal{R})$. Second, as
      $\mathcal{R}$ does not contain any action standard name from $\Sigma$, for every $\rho \in \mathcal{L}$, there is
      a $\rho' \in \mi{PTA}(\Sigma, \delta)$ such that $\mu(\rho) = \mu(\rho')$. By \autoref{thm:plan-ta-equivalence},
      for every $\rho' \in \mi{PTA}(\Sigma, \delta)$, $\mu(\rho') \in \|\delta\|_{w_\Sigma}$ and $\rho' \models \alpha$
      iff $w_\Sigma, \mu(\rho') \models \alpha$.
      \qedhere
  \end{enumerate}
\end{proof}

Thus, the resulting controller preserves the program's original effects while satisfying all platform constraints.


\section{Conclusion}\label{sec:conclusion}
In this paper, we have described how to synthesize a controller that controls a \golog{} program over a finite domain
against a robot platform with metric temporal constraints. We did so by reducing the problem to the \ac{MTL} synthesis
problem, assuming that the initial state is completely known, the original program does not refer to time and only
induces finite traces. For this
reduction, we generated a \acf{TA} from the initial situation $\Sigma_0$, the program $\delta$ and the platform model
$\mathcal{R}$, where each transition describes all the fluents that are true in the respective situation. We then
synthesized an \ac{MTL} controller that controls the generated \ac{TA} against a set of \ac{MTL} constraints $\Phi$. By
doing so, we obtain a decidable procedure to control an abstract program against a platform model with metric temporal
constraints.

For future work, we plan to implement the proposed synthesis method based on \cite{bouyerControllerSynthesisMTL2006}.

While the restriction to a finite domain is fundamental for the described synthesis method, in future work, we may want
to allow programs that allow infinite traces. This is possible if we restrict the constraints to Safety MTL but requires
modifications to the \ac{TA} representation of the program, as the resulting \ac{TA} must not have infinitely many
states. Furthermore, we may want to allow programs that refer to time, e.g., by defining equivalence classes of traces
that may refer to different points in time but imply the same situation
formulas. Lastly, it would be interesting to go beyond determinate
\acp{BAT} to allow some form of incompleteness, for example, by considering sets of literals under
the open world assumption~\cite{levesqueCompletenessResultReasoning1998}.

\pagebreak
\bibliographystyle{kr}
\bibliography{ConTrAkt}

\end{document}